\newcommand{\svmperf}{\text{SVM$^{\text{\it perf}}$}\xspace}
\newcommand{\kld}{\text{KLD}\xspace}
\newcommand{\ba}{\text{BA}\xspace}
\newcommand{\bakld}{\text{BAKLD}\xspace}
\newcommand{\tpr}{\text{TPR}\xspace}
\newcommand{\tnr}{\text{TNR}\xspace}
\newcommand{\nninit}{\text{NN-init}\xspace}
\newcommand{\dspade}{\textbf{DUPLE}\xspace}
\newcommand{\dspadens}{\textbf{DUPLE-NS}\xspace}
\newcommand{\damp}{\textbf{DAME}\xspace}
\newcommand{\dnemsis}{\textbf{DENIM}\xspace}
\newcommand{\dnemsisns}{\textbf{DENIMS-NS}\xspace}
\newcommand{\nemsis}{\textbf{NEMSIS}\xspace}
\newcommand{\spade}{\textbf{SPADE}\xspace}
\newcommand{\bench}{\textbf{ANN 0-1}\xspace}
\newcommand{\struct}{\textbf{STRUCT-ANN}\xspace}
\begin{document}

\title{Optimizing Non-decomposable Measures with Deep Networks}

\author[2,1]{Amartya Sanyal\footnote{amartya18x@gmail.com}}
\author[2]{Pawan Kumar\footnote{kpawan@cse.iitk.ac.in}}
\author[2]{Purushottam Kar\footnote{purushot@cse.iitk.ac.in}}
\author[3]{Sanjay Chawla\footnote{schawla@qf.org.qa}}
\author[4]{Fabrizio Sebastiani\footnote{fsebastiani@gmail.com}}
\affil[1]{University of Oxford}
\affil[2]{IIT Kanpur}
\affil[3]{Qatar Computing Research Institute}
\affil[4]{Istituto di Scienza e Tecnologia dell'Informazione}

\date{January 28, 2018}


\maketitle

\begin{abstract}
We present a class of algorithms capable of directly training deep neural networks with respect to large families of task-specific performance measures such as the F-measure and the Kullback-Leibler divergence that are structured and non-decomposable. This presents a departure from standard deep learning techniques that typically use squared or cross-entropy loss functions (that are decomposable) to train neural networks. We demonstrate that directly training with task-specific loss functions yields much faster and more stable convergence across problems and datasets. Our proposed algorithms and implementations have several novel features including (i) convergence to first order stationary points despite optimizing complex objective functions; (ii) use of fewer training samples to achieve a desired level of convergence, (iii) a substantial reduction in training time, and (iv) a seamless integration of our implementation into existing symbolic gradient frameworks. We implement our techniques on a variety of deep architectures including multi-layer perceptrons and recurrent neural networks and show that on a variety of benchmark and real data sets, our algorithms outperform traditional approaches to training deep networks, as well as some recent approaches to task-specific training of neural networks.
\end{abstract}


\section{Introduction}
\label{intro}
As deep learning penetrates more and more application areas, there is a natural demand to adapt deep learning techniques to area and task-specific requirements and constraints. An immediate consequence of this is the expectation to perform well with respect to task-specific performance measures. However, this can be challenging, as these performance measures can be quite complex in their structure and be motivated by legacy, rather than algorithmic convenience. Examples include the F-measure that is popular in retrieval tasks, various ranking performance measures such as area-under-the-ROC-curve, and the Kullback-Leibler divergence that is popular in class-ratio estimation problems.

Optimizing these performance measures across application areas has proved to be challenging even when learning linear models, as is evidenced by the recent surge in progress in optimizing ``non-decomposable'' loss functions for learning linear models, as we review in Section~\ref{relatedwork}. The challenge becomes doubly hard when trying to do so while training neural network architectures such as multi-layer perceptrons and convolutional or recurrent neural networks.

The vast majority of training techniques used for neural network at present consist of using simple per-sample loss functions such as least-squares loss or cross-entropy. While their use has allowed research directions to focus more on developing more evolved network architectures, as well as developing highly optimized implementations of training routines on GPU architectures, we show that this is suboptimal and that a sound effort towards training with task-specific loss functions pays off handsomely.\\

\textbf{Our Contributions}
Our work advances the state-of-the-art in training neural networks on a wide variety of non-decomposable performance measures.
\begin{enumerate}
	\item We show how to train neural networks directly with respect to performance measures that are concave, pseudolinear, or nested concave functions.
	\item Our algorithms are readily adapted to neural architectures such as multi-layered perceptrons and recurrent networks, as well be integrated into popular symbolic gradient frameworks such as Theano, TensorFlow, and PyTorch.
	\item Our methods offer far superior performance than traditional cross-entropy based training routines -- on an F-measure maximization task on a benchmark dataset a9a, our method achieves an F-measure of around 0.68 in less than 10 mini-batch iterations whereas it takes traditional cross-entropy based training more than 80 iterations to reach similar performance levels.
	\item Our methods also outperform recently proposed techniques for training deep networks with ranking performance \cite{song2016}. On a benchmark dataset IJCNN, the technique of Song et al. is only able to offer a min-TPR/TNR performance of around 0.55 whereas our technique is able to reach performance over 0.95 in very few iterations.
	\item We apply our techniques to an end-to-end sentimental analysis quantification network and achieve near perfect quantification scores on a challenge dataset using a substantially less number of training iterations.
	\item We offer formal stabilization guarantees for all our algorithms.
\end{enumerate}

\section{Related Work}
\label{relatedwork}
The recent years have seen much interest, as well as progress, in training directly with task-specific performance measures in the field of classification and ranking. Some notable works include those of \cite{KoyejoNRD14,NarasimhanVA14} that investigate the statistical properties of plug-in classifiers for various non-decomposable objectives including F-measure, and \cite{KarLNCS2016,KarSJK13,NarasimhanA13b,Narasimhan:2015eu} which propose stochastic gradient-style algorithms for optimizing non-decomposable performance measures such as F-measure, KL-divergence, area under the ROC curve (AUC), precision recall curve (AUCPR), recall at fixed precision (R@P), etc.

However, all the works cited above focus only on training linear models. Although this allows for simple algorithms for which the works provide very detailed analyses and theoretical guarantees, the approaches do not directly extend to deep networks. Algorithms for deep learning which directly optimize non-decomposable performance measures are relatively unexplored. This can be attributed to the de-facto use of the backpropagation algorithm for training neural networks which crucially depends on the loss function being decomposable.

We are aware of two significant efforts towards training deep networks with non-decomposable performance measures. Below we discuss both to put our contributions in perspective.
\begin{enumerate}
	\item Song et. al.~\cite{song2016} introduce an algorithm for training neural networks for ranking tasks with the average precision as the performance measure. The most key contribution of \cite{song2016} is a result that shows that for nicely behaved non-decomposable loss functions, the expected gradient of the loss function with respect to the network weights can be expressed in terms of standard decomposable loss functions such as cross-entropy and least squares loss.
	\item Eban et. al.~\cite{Eban17} introduce algorithms for optimizing ranking objectives e.g. area under the precision-recall curve and precision at a fixed recall rate.
\end{enumerate}

Both the works above are focussed on ranking measures whereas our work addresses classification and class-ratio estimation (quantification) measures. The applications of classification are various in machine learning and data analysis. The problem of quantification expects accurate estimation of relative prevalence of class labels (e.g. fraction of positive vs negative reviews) and is useful in social engineering and epidemiology.

The work of \cite{song2016} only considers average precision as the performance measure and does not address performance measures we study such as F-meaure and KL divergence. Moreover, we adapted the method proposed in \cite{song2016} to performance measures we study and our experiments show that our precise primal dual techniques far outperform the method of \cite{song2016}.

Although the work of \cite{Eban17} does consider the F-measure which we also study, they do not report any experimentation with F-measure. A possible reason for this might be that their algorithm requires a constrained optimization problem to be solved that is challenging over deep networks. We, on the other hand, provide very generic methods for solving three classes of performance measures which include a large number of widely used measures e.g. H-mean, G-mean, Jaccard coefficient, Q-measure etc which \cite{Eban17} cannot handle.

Furthermore, neither of \cite{song2016,Eban17} offer any convergence guarantees for their proposed algorithms whereas we do offer stabilization and first order stationarity guarantees for our methods.

As a concluding remark, we note that our methods do adapt techniques that were earlier proposed for training linear models, such as in \cite{NarasimhanKJ2015}. However our work differs from existing works, including \cite{NarasimhanKJ2015}, in a significant manner and constitutes an independent contribution. Previous works, such as \cite{NarasimhanKJ2015} only consider linear models which lead to convex problems. We note later in this paper, that a naive and direct application of existing techniques to deep networks yields poor results. The techniques in \cite{NarasimhanKJ2015} cannot be integrated into modern deep learning frameworks like Theano, TensorFlow, PyTorch in scalable manner. Our techniques show how to do so. Moreover, we also provide formal stationarity guarantees for our algorithms when applied to deep networks that \cite{NarasimhanKJ2015} cannot provide since they crucially assume convexity of their problems.

\section{Problem Setting}
\label{sec:formulation}
For sake of simplicity, we restrict ourselves to binary classification problems. Let $\cX \subset \bR^d$ be the space of feature vectors and $\cY = \bc{-1,+1}$ be the label set. The training data set $S$ shall be sampled i.i.d. from some fixed but unknown distribution $\cD$ over $\cX \times \cY$. The proportion of positives in the population and sample $S$ will be denoted by $p = \Pp{(\vx,y)\sim\cD}{y = +1}$ and $\hat p_S$ respectively.

In sharp contrast to most previous work in multivariate optimization that considers only linear models, we concentrate on \emph{non-linear models}, especially those induced by deep neural networks. We will assume that the neural architecture (number of layers, nodes, activation functions and connectivity) has been fixed and let $\cW$ denote the space of all models (weights on the network edges).

To perform learning, we will use a neural model, whose edge weights are indexed by $\vw \in \cW$, to assign a \emph{score} to every data point $\vx \in \cX$ (that can be converted into labels, class probability estimates etc). Linear models typically assign a score by simply computing $\ip{\vw}{\vx}$. However, we will use a more general notation $f(\vx;\vw)$ to denote the score given to the data point $\vx$ by the neural model indexed by the weights $\vw$. The function $f$ can be seen as encoding all the neural connections and activations. We stress that the function $f$ is, in general, neither convex nor concave. We note that this lack of structure in the scoring function precludes a large body of work in linear multivariate optimization and quantification from being applied to deep models.

We will consider performance measures that can be expressed in terms of the true positive rate (TPR) and true negative rate (TNR) of the model. Since TPR and TNR are count-based measures, they are unsuitable for numerical optimization algorithms. For this reason, we consider the use of \emph{reward functions} as surrogates of the TPR and TNR values. A reward function $r$ assigns a \emph{reward} $r(\hat y,y)$ when the true label is $y \in \cY$ but the prediction is $\hat y \in \bR$. Given a reward function $r$, a model $\vw \in \cW$, data point $(\vx,y) \in \cX \times \cY$, and scoring function $f$, we will use
\begin{align*}
r^+(\vw; \vx, y) &= \frac{1}{p}\cdot r(f(\vx;\vw), y)\cdot\ind{y = 1}\\
r^-(\vw; \vx, y) &= \frac{1}{1-p}\cdot r(f(\vx;\vw), y)\cdot\ind{y=-1}
\end{align*}
\noindent to calculate rewards on positive and negative points ($\ind{\cdot}$ denotes the indicator function). The expected value of these rewards will be treated as surrogates of TPR and TNR. Note that since $\E{r^+(\vw; \,\vx, y)}$ = $\E{r(f(\vx;\vw),y)|y = 1}$, setting $r^{0\text{-}1}(\hat{y}, y)$ = $\ind{y\cdot\hat{y} > 0}$ i.e. classification accuracy as the reward function yields $\E{r^+(\vw; \,\vx, y)} = \text{TPR}(\vw)$. We will use the shorthand $P(\vw)=\E{r^+(\vw; \vx, y)}$ to denote population averages of the reward function and, given a sample of $n$ data points $S = \bc{(\vx_1,y_1),\ldots,(\vx_n,y_n)}$, denote the sample average as $\hat P_S(\vw) = \frac{1}{n}\sum_{i=1}^n r^+(\vw; \vx_i, y_i)$ and similarly define $N(\vw), \hat N_S(\vw)$. Unlike previous work \cite{KarLNCS2016,NarasimhanKJ2015}, we will \emph{not} restrict ourselves to concave surrogate reward functions. In particular we will utilize the \emph{sigmoidal} reward, which is widely used as an activation function in neural networks is non-concave: $r_{\text{sigmoid}}(\hat y, y) = (1+\exp(-y\cdot\hat y))^{-1}$

\subsection{Performance Measures}
\label{sec:perf}


\begin{table}[t]
\caption{List of performance measures $\Psi(P,N)$ where $p,n$ denote the TPR and TNR values obtained by the model.}
\centering
{\small\begin{tabular}{cccc}
\hline
Name & Type & Expression $(P,N)$\\
\hline
Min \cite{Vincent94} & Concave & $\min\{P, N\}$\\
Q-Mean \cite{KennedyND09} & Concave & $1 - \sqrt{\frac{(1-P)^2+(1-N)^2}{2}}$\\
F$_\beta$ \cite{Manning+08} & Pseudolinear & $\frac{(1 + \beta^2)\cdot P}{\beta^2 + n/p + P - n/p\cdot N}$\\
KLD \cite{Barranquero:2015fr} & Nested Concave & see text\\
\hline
\end{tabular}}
\label{tab:perf-list}
\end{table}



We will consider three general classes of performance measures, namely, (i) \textit{Concave Performance Measures}, (ii) \textit{Pseudo-linear Performance Measures} and (iii) \textit{Nested Concave Performance Measures}.
In our experiments, we present results on a selection of these performance measures which are listed in Table~\ref{tab:perf-list}.\\

\noindent\textbf{Concave Performance Measures}: These measures can be written as a concave function of the TPR and TNR values:
\[
\cP_\Psi(\vw) = \Psi\br{\tpr(\vw),\tnr(\vw)}
\]
for some concave link function $\Psi: \bR^2 \rightarrow \bR$. These measures are frequently used for cost-sensitive classification in cases with severe label imbalance, for example detection theory \cite{Vincent94}. A popularly used member of this family is the so-called Min-function assigns the value $\min\bc{\tpr(\vw),\tnr(\vw)}$ to a model $\vw$. Note that this compels the model to pay equal attention to both classes. Other examples include the Q-mean and H-mean measures.\\

\noindent\textbf{Pseudo-linear Performance Measures}: These measures can be written as a ratio of two linear functions of the TPR and TNR values of the model, i.e. they have a fractional linear link function. More specifically, given given coefficients $\va,\vb\in\bR^3$,
\[
\cP_{(\va,\vb)}(\vw) = \frac{a_0 + a_1\cdot \tpr(\vw) + a_2\cdot \tnr(\vw)}{b_0 + b_1\cdot \tpr(\vw) + b_2\cdot \tnr(\vw)}.
\]
The popularly used F-measure \cite{Manning+08} is actually a pseudo-linear performance measure in terms of the TPR, TNR values of a model although it is more commonly represented as the harmonic mean of precision and recall. Other members include the Jaccard coefficient and the Gower-Legendre measure.\\

\noindent\textbf{Nested Concave Performance Measures}: Recent works e.g. \cite{Barranquero:2015fr,KarLNCS2016} in problem areas such as quantification and class ratio estimation problems, have brought focus on performance measures that can be written as concave combinations of concave performance measures. More formally, given three concave functions $\Psi, \zeta_1, \zeta_2: \bR^2 \rightarrow \bR$, we define a performance measure
\[
\cP_{(\Psi, \zeta_1, \zeta_2)}(\vw) = \Psi(\zeta_1(\vw),\zeta_2(\vw)),
\]
where $\zeta_i(\vw) := \zeta_i(\tpr(\vw),\tnr(\vw)), i = 1,2$.
%
A widely used measure for quantification tasks is the \textit{\kld: Kullback-Leibler Divergence}~\cite{Barranquero:2015fr,Esuli:2015gh,Gao:2015ly} which can be shown to be a sum of concave functions of the TPR and TNR. If $\vp \in \bR^2$ is the vector of true class priors for a binary classification task and $\hat\vp$ an estimate thereof, then
\begin{align}
  \label{eq:KLD2}
 \kld(\vp,\hat{\vp}) & = \sum_{y\in \mathcal{Y}}p(y)\log\frac{p(y)}{\hat{p}(y)}
\end{align}
$\kld(p,\hat{p}) = 0$ indicates perfect quantification.

We note that there are several other performance measures that our techniques can handle but which we do not discuss here due to lack of space. These include measures for class-imbalanced classification such as H-mean, G-mean, Jaccard coefficient (see \cite{NarasimhanKJ2015}), as well as quantification measures such as Q-measure, NSS and CQB (see \cite{KarLNCS2016}).


\section{Deep Optimization Algorithms}
\label{sec:method}
The task of training deep models directly for quantification performance measures requires us to address the problem of optimizing the concave, nested concave, and pseudolinear performance measures we discussed in Section~\ref{sec:perf} which is challenging due to several reasons: 1) these measures are non-decomposable and do not lend themselves to straightforward training methods such as gradient descent or backpropagation, 2) deep models offer no convenience of convexity, and 3) existing methods for optimizing such measures e.g. \cite{KarLNCS2016,Narasimhan:2015eu} fail to apply directly to deep models. In fact, we will see in Section~\ref{sec:exps} that direct application of traditional techniques yields poor results.

%

%

This section will show how to overcome these challenges to arrive at scalable methods for training deep networks directly on complex non-decomposable measures. A very desirable trait of our methods is that they all enjoy local convergence guarantees. Our techniques also offer far superior empirical performance as compared to typical training methods for deep models.

\begin{algorithm}[t]
\caption{\small \dspade: Dual UPdates for Learning dEep-models}
\label{algo:dspade}
\begin{algorithmic}[1]
	\Require Primal step sizes $\eta_t$, network configuration $\bc{d_\text{in},\text{conf}}$, batch size $b$
	\State $\vw^0 \leftarrow \nninit(d_\text{in},1,\text{conf})$
	\State $\bc{\alpha^0, \beta^0, r_+, r_-, n_+, n_-} \leftarrow 0$
	\For{$t = 1, 2, \ldots, T$}
		\State $S_t \leftarrow$ SAMPLE mini-batch of $b$ data points $\bc{(\vx^t_i,y^t_i)}_{i=1,\ldots,b}$
		\State $\vw^t \leftarrow \vw^{t-1} + \eta_t\cdot\nabla_\vw g(\vw^t; S_t, \alpha^{t-1}, \beta^{t-1})$ \Comment{Primal Step}
		\State $r_+ \leftarrow r_+ + \frac{1}{b}\sum_{i=1}^br^+(\vw^t;\vx^t_i,y^t_i)$ \Comment{Tot. reward on +ves}
		\State $r_- \leftarrow r_- + \frac{1}{b}\sum_{i=1}^br^-(\vw^t;\vx^t_i,y^t_i)$ \Comment{Tot. reward on -ves}
		\State $n_+ \leftarrow n_+ + \frac{1}{b}\sum_{i=1}^n\ind{y^t_i = +1}$ \Comment{Total \# positives}
		\State $n_- \leftarrow n_- + \frac{1}{b}\sum_{i=1}^n\ind{y^t_i = -1}$ \Comment{Total \# negatives}
		\State $(\alpha^{t},\beta^{t}) \leftarrow \displaystyle\argmin_{(\alpha,\beta)}\bs{\alpha\frac{r_+}{n_+} + \beta\frac{r_-}{n_-} - \Psi^\ast(\alpha,\beta)} $\Comment{Dual Step}
	\EndFor
	\State \Return $\vw^T$
\end{algorithmic}
\end{algorithm}

In the following, the procedure $\nninit(d_\text{in},d_\text{out},\text{conf})$ initializes a neural network with $d_\text{in}$ input nodes, $d_\text{out}$ output nodes, and internal configuration (hidden layers, number of internal nodes, connectivity) specified by $\text{conf}$.

\subsection{\dspade: A Deep Learning Technique for Concave Performance Measures}
We present \dspade (Algorithm~\ref{algo:dspade}), a highly scalable stochastic mini-batch primal dual algorithm for training deep models with concave performance measures. 
We shall find it convenient to define the (concave) Fenchel conjugate of the link functions for our performance measures. For any concave function $\Psi$ and $\alpha, \beta \in \bR$, define
\begin{equation}
\Psi^*(\alpha, \beta) = \inf_{u, v \in \bR} \bc{\alpha u + \beta v  - \Psi(u, v)}.
\label{eq:111}
\end{equation}
By the concavity of $\Psi$, we have, for any $u, v \in \bR$,
\begin{equation}
\Psi(u, v) = \inf_{\alpha, \beta \in \bR} \bc{\alpha u + \beta v  - \Psi^*(\alpha, \beta)}.
\label{eq:222}
\end{equation}
The motivation for \dspade comes from a realization that by an application of the Danskin's theorem, a gradient with respect to the $\Psi$ function may be found out by obtaining the maximizer $\alpha,\beta$ values in \eqref{eq:222}. However since this may be expensive, it is much cheaper to update these ``dual'' variables using gradient descent techniques instead. This results in the \dspade algorithm, a primal dual stochastic-gradient based technique that maintains a primal model $\vw \in \cW$ and two dual variables $\alpha,\beta \in \cR$ and alternately updates both using stochastic gradient steps. At every time step it uses its current estimates of the dual variables to update the model, then fix the model and update the dual variables.

We note that \dspade draws upon the \spade algorithm proposed in \cite{NarasimhanKJ2015}. However, its application to deep models requires non-trivial extensions.
\begin{enumerate}
	\item \spade enjoys the fact that gradient updates are very rapid with linear models and is carefree in performing updates on individual data points. Doing so with neural models is too expensive.
	\item Deep model training frameworks are highly optimized to compute gradients over deep networks, especially on GPU platforms. However, they assume that the objective function with respect to which they compute these gradients is static across iterations. \spade violates this principle since it can be seen as taking gradients with respect to a different cost-weighted classification problem at every iteration.
	\item The theoretical convergence guarantees offered by \spade assume that the reward surrogate functions being used are concave functions with respect to the model. As noted in Section~\ref{sec:formulation}, for neural models, even the scoring function $f(\vx;\vw)$ is not a concave/convex function of $\vw$.
\end{enumerate}


\dspade addresses all of the above issues and makes crucial design changes that make it highly optimized for use with deep networks. 

\begin{enumerate}
	\item \dspade overcomes the issue of expensive gradients by amortizing the gradient computation costs over mini-batches. We found this to also improve the stability properties of the algorithm.
	\item To overcome the issue of changing objective functions, \dspade works with an \emph{augmented objective function}. Given a model $\vw \in \cW$, a set $S$ of labeled data points, and scalars $\alpha,\beta$, we define
\[
g(\vw; S, \alpha, \beta) = \alpha\cdot\hat P_S(\vw) + \beta\cdot\hat N_S(\vw).
\]
(see Section~\ref{sec:formulation} for notation). At all time steps, \dspade takes gradients with respect to this augmented objective function instead. We exploit symbolic computation capabilities offered by frameworks such as Theano \cite{Bergstraetal2010} to allow the scalars $\alpha, \beta$ to be updated dynamically and train the network efficiently on a different objective function at each time step.
	\item Our analysis for \dspade makes absolutely no assumptions on the convexity/concavity of the reward and scoring functions. It only requires both functions $r^+,r^-$ to be differentiable almost-everywhere. Thus, \dspade only assumes the bare minimum to allow itself to take gradients.
\end{enumerate}




We are able to show the following convergence guarantee for \dspade (see Appendix~\ref{app:dspade-proof}) 
assuming that the reward functions $r(f(\vx;\vw),y)$ are $L$-smooth functions of the model $\vw$. This is satisfied by all reward functions we consider. Note, however, that nowhere will we assume that the reward functions are concave in the model parameters. We will use the shorthand $\nabla^t = \nabla_\vw g(\vw^t; S_t, \alpha^t, \beta^t)$ and $F(\vw^t,\valpha^t) = g(\vw^t; S_t, \alpha^t, \beta^t)$. Notice that this result assures us that the \dspade procedure will stabilize rapidly and not oscillate indefinitely.

\begin{theorem}
\label{thm:dspade-conv}
Consider a concave performance measure defined using a link function $\Psi$ that is concave and $L'$-smooth. Then, if executed with a uniform step length satisfying $\eta < \frac{2}{L}$, then \dspade $\epsilon$-stabilizes within $\softO{\frac{1}{\epsilon^2}}$ iterations. More specifically, within $T$ iterations, \dspade identifies a model $\vw^t$ such that $\norm{\nabla^t}_2 \leq \bigO{\sqrt{L'\frac{\log T}{T}}}$.
\end{theorem}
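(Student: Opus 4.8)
The plan is to read \dspade as gradient \emph{ascent} on the augmented objective $g(\vw;S,\alpha,\beta)=\alpha\cdot\hat P_S(\vw)+\beta\cdot\hat N_S(\vw)$, whose only dependence on the iteration (beyond the model $\vw$) is through the dual pair $(\alpha^t,\beta^t)$. I would combine the usual smoothness ascent lemma with a bound showing that this dual pair drifts slowly, at rate $\bigO{L'/t}$, so that its cumulative effect over $T$ steps is only logarithmic. First, since every reward $r(f(\vx;\vw),y)$ is $L$-smooth in $\vw$ and (as argued below) the duals remain bounded, $g(\cdot;S,\alpha^{t-1},\beta^{t-1})$ is $L$-smooth in $\vw$ (folding the dual bound into $L$). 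The ascent step $\vw^t=\vw^{t-1}+\eta\nabla_\vw g(\vw^{t-1};S,\alpha^{t-1},\beta^{t-1})$ then gives
\[
g(\vw^t;S,\alpha^{t-1},\beta^{t-1}) \ge g(\vw^{t-1};S,\alpha^{t-1},\beta^{t-1}) + \eta\left(1-\tfrac{L\eta}{2}\right)\norm{\nabla_\vw g(\vw^{t-1};S,\alpha^{t-1},\beta^{t-1})}_2^2,
\]
and the hypothesis $\eta<2/L$ is exactly what keeps the coefficient $\eta(1-L\eta/2)$ positive, so each step makes progress proportional to the squared gradient norm.

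Next I would control the dual drift. By the Fenchel relation~\eqref{eq:222} and smoothness of $\Psi$, the minimizer computed in the dual step equals $(\alpha^t,\beta^t)=\nabla\Psi(u_t,v_t)$ evaluated at the running reward averages $(u_t,v_t)=(r_+/n_+,\,r_-/n_-)$; since $L'$-smoothness of $\Psi$ makes $\nabla\Psi$ be $L'$-Lipschitz, $\norm{(\alpha^t,\beta^t)-(\alpha^{t-1},\beta^{t-1})}_2 \le L'\norm{(u_t,v_t)-(u_{t-1},v_{t-1})}_2$. The quantities $r_\pm,n_\pm$ are cumulative sums of bounded per-step increments, so the ratios $u_t,v_t$ are weighted means over $t$ terms and move by only $\bigO{1/t}$ between consecutive iterations; consequently the duals drift at rate $\bigO{L'/t}$ and stay in a bounded set (which justifies folding their bound into $L$ above, and is valid whenever the reward values lie in a bounded range, e.g. $[0,1]$ for $r_{\text{sigmoid}}$).

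I would then telescope, tracking the potential $\delta_t := g(\vw^t;S,\alpha^t,\beta^t)$. The ascent lemma lower-bounds $g(\vw^t;S,\alpha^{t-1},\beta^{t-1})$ in terms of $\delta_{t-1}$, while passing from duals $(\alpha^{t-1},\beta^{t-1})$ to $(\alpha^t,\beta^t)$ changes the objective by at most $\bigO{L'/t}$ (bounded rewards times the dual drift). Chaining these gives $\delta_t \ge \delta_{t-1} + \eta(1-L\eta/2)\norm{\nabla_\vw g(\vw^{t-1};S,\alpha^{t-1},\beta^{t-1})}_2^2 - \bigO{L'/t}$. Summing $t=1,\dots,T$, using that $\delta_t$ is bounded (bounded rewards and duals) and $\sum_{t\le T}1/t=\bigO{\log T}$, yields $\sum_{t\le T}\norm{\nabla_\vw g(\vw^{t-1};S,\alpha^{t-1},\beta^{t-1})}_2^2 = \bigO{L'\log T}$. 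Finally I would convert the gradient at $(\vw^{t-1},\alpha^{t-1},\beta^{t-1})$ into the target quantity $\nabla^t=\nabla_\vw g(\vw^t;S_t,\alpha^t,\beta^t)$: $L$-smoothness bounds the change due to $\vw^{t-1}\!\to\!\vw^t$ by $L\eta\norm{\nabla_\vw g(\vw^{t-1};\cdots)}_2$, and the dual drift contributes a further $\bigO{L'/t}$, so $\norm{\nabla^t}_2^2 = \bigO{\norm{\nabla_\vw g(\vw^{t-1};\cdots)}_2^2}+\bigO{L'^2/t^2}$. Averaging over $t$ then gives $\frac1T\sum_{t\le T}\norm{\nabla^t}_2^2 = \bigO{L'\log T/T}$, so some iterate satisfies $\norm{\nabla^t}_2 \le \bigO{\sqrt{L'\log T/T}}$, i.e. $\epsilon$-stabilization within $\softO{1/\epsilon^2}$ steps.

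The main obstacle is the time-varying nature of the objective, and in particular the mini-batch sampling: the telescoping above is clean only because I treated the data-dependent landscape as fixed across steps (equivalently, I bounded the full-batch primal gradient). With genuinely fresh mini-batches $S_t$, consecutive objectives $g(\cdot;S_t,\cdot)$ and $g(\cdot;S_{t+1},\cdot)$ differ by $\bigO{1}$ rather than $\bigO{1/t}$, which breaks the telescope; handling this cleanly requires either reading the stationarity claim at the full-batch level, or passing to expectations and absorbing the resulting (bounded-variance) sampling term, which is the delicate point under a constant step size. The remaining, more mechanical difficulties are verifying that $(\alpha^t,\beta^t)=\nabla\Psi(u_t,v_t)$ and that the duals stay bounded, both of which follow from standard Fenchel duality for smooth concave $\Psi$.
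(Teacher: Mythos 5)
Your proposal matches the paper's proof in all essentials: both rest on the smoothness ascent lemma with the coefficient $\eta\bigl(1-\tfrac{L\eta}{2}\bigr)$, telescope the augmented objective, and bound the cumulative dual-adjustment term $\sum_{t} F(\vw^t,\valpha^{t-1}) - F(\vw^t,\valpha^t)$ by $\bigO{L'\log T}$ before averaging. The only cosmetic difference is that the paper obtains that bound by citing the forward-regret guarantee of follow-the-leader on the $\tfrac{1}{L'}$-strongly convex dual losses, whereas you prove the same fact inline via the $\bigO{L'/t}$ drift of $(\alpha^t,\beta^t)=\nabla\Psi(u_t,v_t)$ under the $\bigO{1/t}$ movement of the running averages; the paper likewise restricts the formal argument to the batch version, exactly as you flag.
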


\subsection{\dnemsis: Deep Learning with Nested Concave Performance Measures}
We extend the \dspade algorithm to performance measures that involve a nesting of concave functions. To reiterate,
the KLD performance measure which is used extensively for quantification, falls in this category.
These measures are challenging to optimize using \dspade due to their nested structure which prevents a closed form solution for the Fenchel conjugates.

To address this challenge, we present \dnemsis (Algorithm~\ref{algo:dnemsis}) that itself nests its update to parallel the nesting of the performance measures. \dnemsis follows a similar principle as \dspade and is based on the \nemsis algorithm of \cite{KarLNCS2016}. However, the \nemsis algorithm faces the same drawbacks as the \spade algorithm and is unsuitable for training deep models. Due to the more complex nature of the performance measure, \dnemsis works with a slightly different augmented objective function.
\[
h(\vw; S, \valpha, \vbeta, \vgamma) = (\gamma_{1}\alpha_{1} + \gamma_{2}\beta_{1})\cdot\hat P_S(\vw) + (\gamma_{1}\alpha_{2} + \gamma_{2}\beta_{2})\cdot\hat N_S(\vw)
\]
Note that \dnemsis performs inner and outer dual updates that are themselves nested. \dnemsis enjoys similar convergence results as \dspade which we omit for lack of space.

\begin{algorithm}[t]
\caption{\small \dnemsis: A DEep Nested prImal-dual Method}
\label{algo:dnemsis}
\begin{algorithmic}[1]
        \Require Primal step sizes $\eta_t$, network configuration $\bc{d_\text{in},\text{conf}}$, batch size $b$
        \State $\vw^0 \leftarrow \nninit(d_\text{in},1,\text{conf})$
        \State $\bc{\vr^0 , \vq^0, \valpha^0, \vbeta^0, \vgamma^0} \leftarrow (0,0)$
        \For{$t = 1, 2, \ldots, T$}
                \State $S_t \leftarrow$ SAMPLE mini-batch of $b$ data points $\bc{(\vx^t_i,y^t_i)}_{i=1,\ldots,b}$
                \State $\vw^t \leftarrow \vw^{t-1} + \eta_t\cdot\nabla_\vw h(\vw^t; S_t, \valpha^t, \vbeta^t, \vgamma^t)$ \Comment{Primal Step}
                \State $\vq^t \leftarrow (t-1)\cdot\vq^{t-1} + (\alpha^{t-1}_1, \beta^{t-1}_1)\sum_{i=1}^b r^+(\vw^t;\vx^t_i,y^t_i)$
                \State $\vq^t \leftarrow \vq^t + (\alpha^{t-1}_2, \beta^{t-1}_2)\sum_{i=1}^b r^-(\vw^t;\vx^t_i,y^t_i)$
                \State $\vq^t \leftarrow t^{-1}\br{\vq^t - (\zeta_1^\ast(\valpha^t),\zeta_2^\ast(\vbeta^t))}$
                \State $\vr^t \leftarrow t^{-1}\br{(t-1)\cdot\vr^{t-1} + \sum_{i=1}^b(r(\vw^t; \vx^t_i, y^t_i))}$ \footnotemark
                \State $\valpha^t = \underset{\valpha}{\arg\min}\bc{\valpha\cdot\vr^t - \zeta_1^\ast(\valpha)}$ \Comment{Inner Dual Step 1}
                \State $\vbeta^t = \underset{\vbeta}{\arg\min}\bc{\vbeta\cdot\vr^t - \zeta_2^\ast(\vbeta)}$ \Comment{Inner Dual Step 2}
                \State $\vgamma^t = \underset{\vgamma}{\arg\min}\bc{\vgamma\cdot\vq^t - \Psi^\ast(\vgamma)}$ \Comment{Outer Dual Step}
        \EndFor
        \State \Return $\vw^T$
\end{algorithmic}
\end{algorithm}

\footnotetext{$r(\vw^t; \vx^t_i, y^t_i) = (r^+(\vw^t; \vx^t_i, y^t_i), r^-(\vw^t; \vx^t_i, y^t_i)) $}

\subsection{\damp: A Deep Learning Technique for Pseudolinear Performance Measures}
We now present \damp (Algorithm~\ref{algo:damp}), an algorithm to for training deep models on pseudolinear performance measures such as F-measure which are extremely popular in several areas and direct optimization routines are sought after. We recall that although the work of \cite{Eban17} does discuss F-measure optimization, we do not have access to any scalable implementations of the same. Our algorithm \damp, on the other hand, is based on an alternating strategy, is very scalable and gives superior performance across tasks and datasets. For sake of simplicity, we represent the pseudolinear performance measure as
\[
\cP_{(\va,\vb)}(\vw) = \frac{\cP_{\va}(\vw)}{\cP_{\vb}(\vw)} = \frac{a_0 + a_1\cdot \tpr(\vw) + a_2\cdot \tnr(\vw)}{b_0 + b_1\cdot \tpr(\vw) + b_2\cdot \tnr(\vw)}
\]
We now define the notion of a \emph{valuation function}.
\begin{definition}[Valuation Function]
The valuation of a pseudolinear measure $\cP_{(\va,\vb)}(\vw)$ at any level $v > 0$, is defined to be $V(\vw,v) = \cP_{\va}(\vw) - v\cdot\cP_{\vb}(\vw)$
\end{definition}

\begin{algorithm}[t]
	\caption{\small \damp: A Deep Alternating Maximization mEthod}
	\label{algo:damp}
	\begin{algorithmic}[1]
		\Require Training dataset $T = \bc{(\vx_i,y_i)}_{i=1}^n$, step lengths $\eta_t$, network configuration $\bc{d_\text{in},d_\text{int},d_\text{out},\text{conf}_1,\text{conf}_2}$, batch size $b$
		\State $\vw^{-1}_1 \leftarrow \nninit(d_\text{int},1,\text{conf}_1)$
		\State $\vw^{-1}_2 \leftarrow \nninit(d_\text{in},d_\text{int},\text{conf}_2)$
		\State $(\vw^{0,0}_1,\vw^0_2) \leftarrow$ Pre-train on cross-entropy on dataset $T$
		\State Create new dataset $\tilde T = \bc{(f(\vx_i,\vw^0_2),y_i)}_{i=1}^n$ \Comment{New features}
		\For{$t=1,2,\ldots,T$}
		\State $S_{t,0} \leftarrow$ SAMPLE mini-batch of $b$ data points $(\vz^{t,t'}_i,y^{t,t'}_i)$
		\State $v^t \leftarrow \cP_{(\va,\vb),S_{t,0}}(\vw^{t-1}_1,\vw^0_2)$
			\For{$t'=1,2,\ldots,T'$}
				\State $S_{t,t'} \leftarrow$ SAMPLE mini-batch of $b$ data points $(\vz^{t,t'}_i,y^{t,t'}_i)$
				\State $\vw^{t-1,t'}_1 \leftarrow{} \vw^{t-1,t'-1}_1  +\eta_t\cdot\nabla_{\vw^{t-1,t'-1}_1} V_{S_{t,t'}}((\vw^{t-1,t'-1}_1,\vw^0_2),v^t)$
			\EndFor
			\State $\vw^t_1 \leftarrow \vw^{t-1,T'}_1$
		\EndFor
		\State \Return $(\vw^T_1,\vw^0_2)$
	\end{algorithmic}
\end{algorithm}

\damp makes use of two simple observations in its operation: 1) A model $\vw$ has good performance i.e. $\cP_{(\va,\vb)}(\vw) > v$ iff it satisfies $V(\vw,v) > 0$, and 2) the valuation function itself is a performance measure but a decomposable one, corresponding to a cost-weighted binary classification problem with the costs given by the weights $\va,\vb$ and $v$.
 
We will use the notation $\cP_{(\va,\vb),S}(\vw)$ and $V_S(\vw,v)$ to denote respectively, the performance measure, and the valuation function as defined on a data sample $S$. At every time step $t$, \damp looks at $v^t = \cP_{(\va,\vb)}(\vw^t)$ and attempts to approximate the task of optimizing F-measure (or any other pseudolinear measure) using a cost weighted classification problem described by the valuation function at level $v^t$. After updating the model with respect to this approximation, \damp refines the approximation again, and so on.

We note that similar alternating strategies have been studied in literature in the context of F-measure before \cite{KoyejoNRD14,NarasimhanKJ2015} and offer provable convergence guarantees for linear models. However, a direct implementation of these methods gives extremely poor results as we shall see in the next section. The complex nature of these performance measures, that are neither convex nor concave, make it more challenging to train deep models.

To solve this problem, \damp utilizes a two-stage training procedure, involving pretraining the entire network (i.e. both upper and lower layers) on a standard training objective such as cross-entropy or least squares, followed by fine tuning of \emph{only the upper layers} of the network to optimize F-measure. The pretraining is done using standard stochastic mini-batch gradient descent.

For sake of simplicity we will let $(\vw_1,\vw_2)$ denote a stacking of the neural networks described by the models $\vw_1$ and $\vw_2$. More specifically $\vw_2$ denotes a network with input dimensionality $d_\text{in}$ and output dimensionality $d_\text{int}$ whereas $\vw_1$ denotes a network with input dimensionality $d_\text{int}$ and output dimensionality $d_\text{out}$. To ensure differentiability, \damp uses valuation functions with appropriate reward functions replacing the \tpr and \tnr functions.

We are able to show a stronger \emph{first order stationary convergence} guarantee for \damp. For sake of simplicity, we present the proof for the batch version of the algorithm in Appendix~\ref{app:damp}. We only assume that the valuation functions are $L$-smooth functions of the upper model $\vw_1$. It is also noteworthy that we present the guarantee only for the fine-tuning phase since the pre-training phase enjoys local convergence guarantees by standard arguments. For this reason, we will omit the lower network in the analysis. We also assume that the performance measure satisfies $\cP_{\va}(\vw) \leq M$ for all $\vw \in \cW$ and $\cdot\cP_{\vb}(\vw) \geq m$ for all $\vw \in cW$. We note that these assumptions are standard \cite{KarLNCS2016,NarasimhanKJ2015} and also readily satisfied by F-measure, Jaccard coefficient etc for which we have $m, M = \Theta(1)$ (see \cite{NarasimhanKJ2015}). Let $\kappa = 1 + M/m$. Then we have the following result.

\begin{theorem}
\label{thm:damp-conv}
If executed with a uniform step length satisfying $\eta < \frac{2}{L\kappa}$, \damp discovers an $\epsilon$-stable model within $\bigO{\frac{1}{\epsilon^2}}$ inner iterations. More specifically, for $t \leq \frac{\kappa^2}{m}\frac{1}{\eta\br{1 - \frac{L\kappa\eta}{2}}\epsilon^2}$, \damp identifies a model $\vw^t_1$ such that $\norm{\nabla_{\vw}\cP_{(\va,\vb)}(\vw)} \leq \epsilon$.
\end{theorem}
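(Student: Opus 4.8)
The plan is to reduce the first-order analysis of the pseudolinear objective $\cP_{(\va,\vb)}$ to a standard smooth-ascent argument on the valuation function, exploiting two facts: that the valuation vanishes at the start of every outer round by construction, and that its gradient agrees with the gradient of $\cP_{(\va,\vb)}$ up to a positive scaling whenever the level $v$ matches the current performance. First I would establish the gradient identity. By the quotient rule, writing $v = \cP_{(\va,\vb)}(\vw) = \cP_\va(\vw)/\cP_\vb(\vw)$,
\[
\nabla_\vw \cP_{(\va,\vb)}(\vw) = \frac{\cP_\vb(\vw)\nabla\cP_\va(\vw) - \cP_\va(\vw)\nabla\cP_\vb(\vw)}{\cP_\vb(\vw)^2} = \frac{1}{\cP_\vb(\vw)}\nabla_\vw V(\vw, v).
\]
Since $\cP_\vb \geq m > 0$ this gives the control $\norm{\nabla_\vw V(\vw, \cP_{(\va,\vb)}(\vw))} = \cP_\vb(\vw)\norm{\nabla\cP_{(\va,\vb)}(\vw)} \geq m\cdot\norm{\nabla\cP_{(\va,\vb)}(\vw)}$. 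The crucial point is that at the very first inner iterate of outer round $t$, namely $\vw^{t-1,0} = \vw^{t-1}$, the frozen level is exactly $v^t = \cP_{(\va,\vb)}(\vw^{t-1})$, so $\norm{\nabla_\vw V(\vw^{t-1}, v^t)}$ is an exact proxy for $\norm{\nabla\cP_{(\va,\vb)}(\vw^{t-1})}$.

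Next I would run the standard ascent lemma on the inner loop. Because the arising levels satisfy $v^t \leq M/m$ and $\cP_\va,\cP_\vb$ are $L$-smooth, the valuation $V(\cdot, v^t)$ is $L\kappa$-smooth; with $\eta < 2/(L\kappa)$ the ascent lemma yields monotone increase with per-step gain at least $\eta(1 - \tfrac{L\kappa\eta}{2})\norm{\nabla_\vw V(\cdot,v^t)}^2$. Summing over the inner loop and keeping only the $t'=0$ term gives $V(\vw^t, v^t) \geq \eta(1-\tfrac{L\kappa\eta}{2})\norm{\nabla_\vw V(\vw^{t-1}, v^t)}^2$. The key structural observation is that $V(\vw^{t-1}, v^t) = \cP_\va(\vw^{t-1}) - v^t\cP_\vb(\vw^{t-1}) = 0$ by the choice of $v^t$, so the inner loop starts the valuation at zero; consequently $V(\vw^t, v^t) \geq 0$, which via the gradient identity rewrites as the telescoping-friendly inequality
\[
\cP_{(\va,\vb)}(\vw^t) - \cP_{(\va,\vb)}(\vw^{t-1}) = \frac{V(\vw^t, v^t)}{\cP_\vb(\vw^t)} \geq c\cdot\norm{\nabla\cP_{(\va,\vb)}(\vw^{t-1})}^2,
\]
where $c = \Theta\br{m^2 \eta(1-\tfrac{L\kappa\eta}{2})/\sup_\vw\cP_\vb}$ collects the constants from the gradient identity and the bound $\cP_\vb \geq m$. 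In particular $\cP_{(\va,\vb)}$ is nondecreasing across outer rounds.

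Finally I would telescope. Since $\cP_{(\va,\vb)} = \cP_\va/\cP_\vb \leq M/m$ is bounded, summing the displayed inequality over $t = 1,\dots,T$ bounds $\sum_t \norm{\nabla\cP_{(\va,\vb)}(\vw^{t-1})}^2$ by a constant, so $\min_{t<T}\norm{\nabla\cP_{(\va,\vb)}(\vw^t)}^2 = \bigO{1/T}$, which is the $\bigO{1/\epsilon^2}$ rate; tracking the constants $m, M, \kappa$ reproduces the stated $\frac{\kappa^2}{m}\frac{1}{\eta(1-L\kappa\eta/2)\epsilon^2}$ bound.

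The main obstacle — and the place the argument genuinely departs from a textbook smooth-nonconvex proof — is the mismatch between the frozen level $v^t$ and the moving performance level inside the inner loop: the gradient identity only certifies stationarity of $\cP_{(\va,\vb)}$ at levels that match the iterate, which is precisely why the analysis must anchor on the $t'=0$ iterate (where the level is exact) rather than on the inner iterate that actually maximizes $V$. A secondary technical point is ensuring the effective smoothness is $L\kappa$ rather than $L$, since the frozen levels can be as large as $M/m$, and confirming that $\cP_\vb$ admits the upper bound needed to turn the valuation gain into a genuine gain in $\cP_{(\va,\vb)}$.
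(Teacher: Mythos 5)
Your proof is correct and rests on the same three pillars as the paper's own argument: the identity $\nabla_\vw\cP_{(\va,\vb)}(\vw) = \nabla_\vw V(\vw,\cP_{(\va,\vb)}(\vw))/\cP_\vb(\vw)$, the ascent lemma applied to the $L\kappa$-smooth frozen valuation $V(\cdot,v^t)$, and the observation that $V(\vw^{t-1},v^t)=0$ so that gains in valuation convert into gains in the bounded quantity $\cP_{(\va,\vb)}$. Where you genuinely diverge is the final accounting. The paper counts \emph{inner} iterations at which $\norm{\nabla_\vw V(\cdot,v^t)}>\epsilon$ and then asserts, via an unelaborated ``easy calculation,'' that a small valuation gradient at an arbitrary inner iterate $\vw^{t,t'}_1$ forces $\norm{\nabla_\vw\cP_{(\va,\vb)}}\leq\epsilon/m$; but for $t'>1$ the frozen level $v^t$ no longer equals $\cP_{(\va,\vb)}(\vw^{t,t'}_1)$, so that step actually requires an additional bound on $\br{\cP_{(\va,\vb)}(\vw^{t,t'}_1)-v^t}\norm{\nabla_\vw\cP_\vb}$ which the paper does not supply. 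You instead anchor the guarantee at the first inner iterate of each outer round, where the level is exact, and telescope $\cP_{(\va,\vb)}(\vw^t)-\cP_{(\va,\vb)}(\vw^{t-1})\geq c\,\norm{\nabla_\vw\cP_{(\va,\vb)}(\vw^{t-1})}^2$ over outer rounds. This buys rigor at the certified point (and matches the theorem's indexing by the outer counter $t$) at the price of discarding the progress made by inner steps $t'\geq 2$, so your count is in outer rounds rather than inner iterations. Both routes share one unstated need, which you correctly flag: an upper bound on $\cP_\vb$ (the paper silently uses $\cP_\vb\leq M$ in the step ``$V(\vw^t_1,v^t)\geq c$ implies $\cP(\vw^t_1)\geq\cP(\vw^{t-1}_1)+c/M$''), without which the valuation gain cannot be converted into a performance gain.
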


\section{Experimental Results}
\label{sec:exps}


\begin{table}
\centering
    {\small\begin{tabular}{|c|c|c|c|c|}
		
    \hline
    \textbf{Data Set}               &\textbf{\# Points}   & \textbf{Feat.}     & \textbf{Positives} & \textbf{Source}\\\hline
      KDDCup08        & 100K               & 117           & 0.61\% & KDDCup08  \\\hline
    PPI                     & 240K               & \phantom{0}85         & 1.19\% & \cite{ppiQBK06}\\\hline
    CoverType       & 580K               & \phantom{0}54         & 1.63\% & UCI\\\hline
    Letter          & \phantom{0}20K             & \phantom{0}16         & 3.92\% & UCI \\\hline
    IJCNN-1         & 140K               & \phantom{0}22         & 9.57\% & UCI\\\hline
    Adult           & \phantom{0}50K             & 123           & 23.93\% & UCI\\\hline
    Twitter        &10K                 &NA         & 77.4\% & SEMEVAL16 \\ \hline
		
\end{tabular}}

\caption{Statistics of data sets used.}
\label{tab:dataset-stats}
\end{table}

We performed extensive evaluation of \dspade, \dnemsis and \damp on benchmark and real-life challenge datesets and found it to outperform both traditional techniques for training neural networks, as well as the more nuanced task-driven training techniques proposed in the work of \cite{song2016}.

\textbf{Datasets}: We use the datasets listed in Table \ref{tab:dataset-stats}. Twitter refers to the dataset revealed as a part of the SEMEVAL 2016 sentiment detection challenge \cite{SemEval:2016:task4:ISTI-CNR}.

\textbf{Competing Methods}: We implemented and adapated several benchmarks from past literature in an attempt to critically assess the performance of our methods.
\begin{enumerate}
	\item \bench refers to a benchmark multi-layer perceptron model trained using the cross-entropy loss functions to minimize the misclassification rate.
	\item \struct refers to an adaptation of the structured optimization algorithm from \cite{song2016} to various performance measures (implementation details in the Appendix~\ref{sec:structual-ann}).
	\item \textbf{ANN-PG} refers to an implementation of a plug-in classifier for F-measure as suggested in \cite{KoyejoNRD14}.
	\item \dnemsisns refers to a variant of the \nemsis algorithm that uses a count based reward instead of sigmoidal rewards. A similar benchmark was constructed for \dspade as well.  
\end{enumerate}

For lack of space, some experimental results for the \damp algorithm are included in Appendix~\ref{app:damp}. All hyper-parameters including model architecture were kept the same for all algorithms. Learning rates were optimized to give best results.


\subsection{Experiments on Concave Measures}
The results with \dspade (Figures~\ref{fig:MinTPRTNR} and \ref{fig:QMean}) on optimizing the MinTPRTNR and the QMean performance measures, show that \dspade offers very fast convergence in comparison to \bench. It is to be noted that on MinTPRTNR, \bench has a very hard time obtaining a non-trivial score. For the experiment on \textbf{IJCNN1}, we ran the experiment for a longer time to allow \bench and \struct to converge and we observe that they are highly time intensive, when compared to \dspade.

These experiments show that \dspade and its variant \dspadens outperform the competitors both in terms of speed as well as accuracy. It is also to be noted that \dspade not only takes lesser iterations than \struct but each iteration of \dspade is at least 10X faster than that of \struct.

\begin{figure*}[t]
\centering
\subfigure[PPI]{
\includegraphics[width=1.5in]{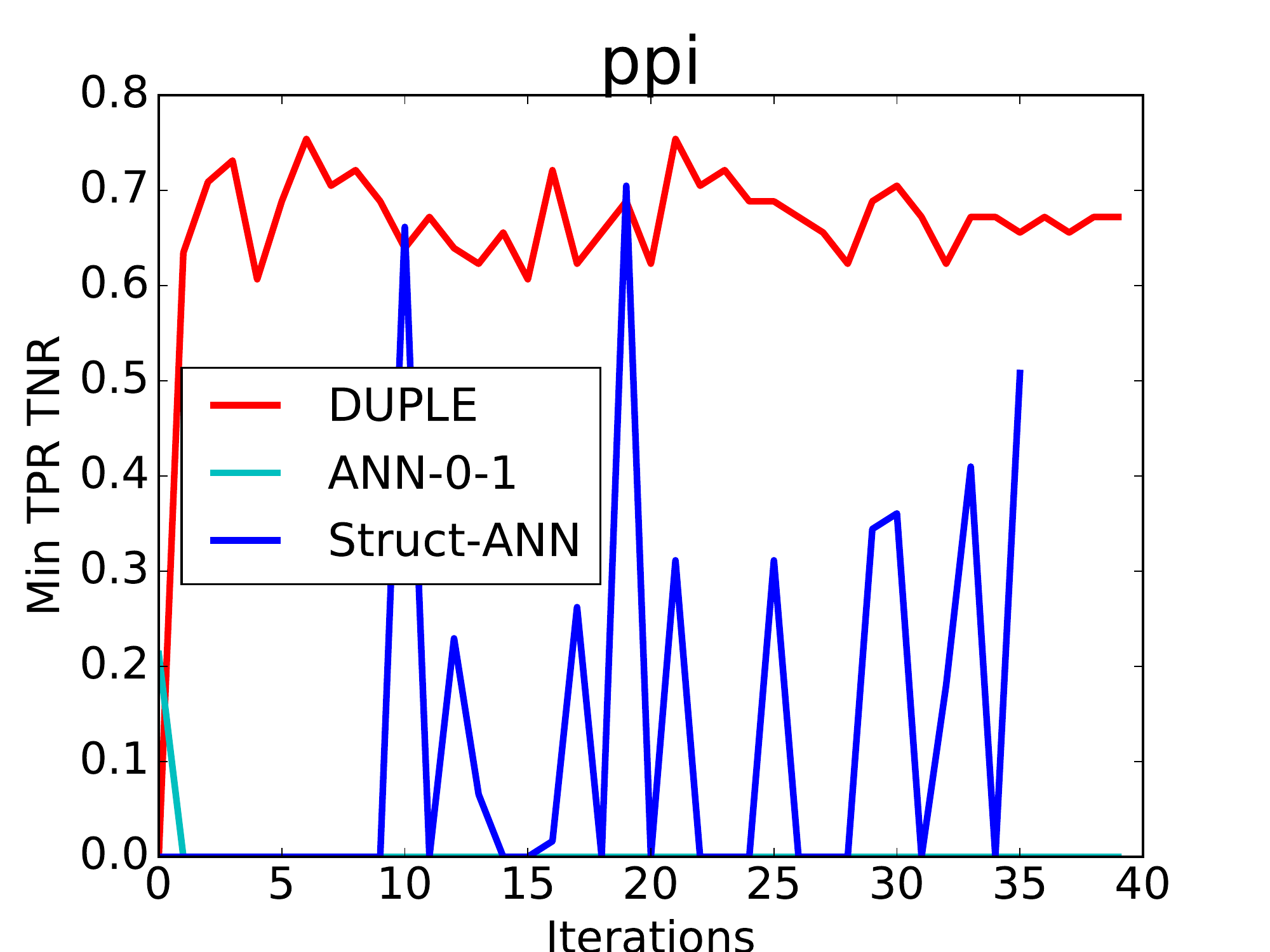}
\label{subfig:kld-ppi}
}\hspace*{-5pt}
\subfigure[KDD08]{
\includegraphics[width=1.5in]{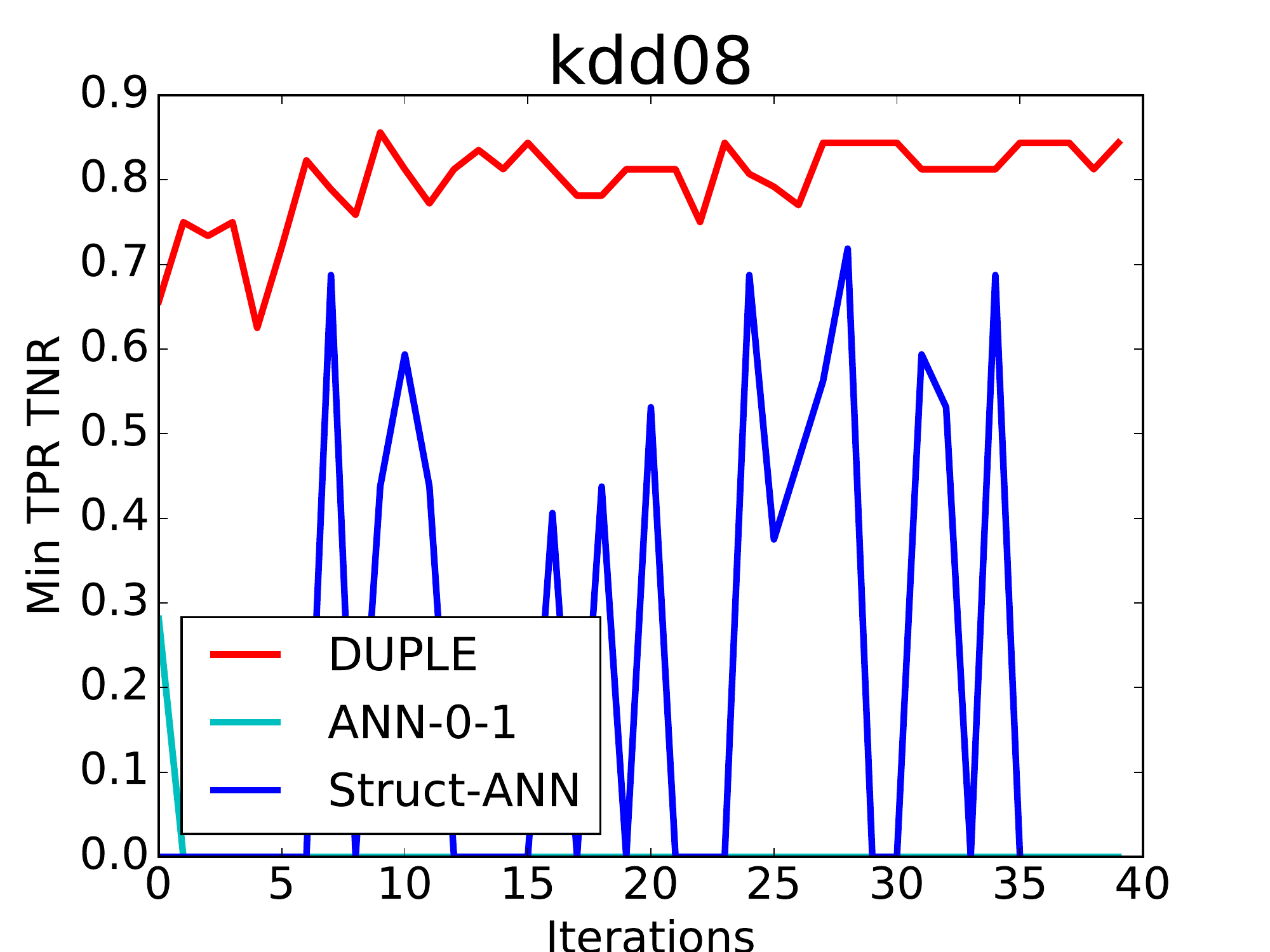}
\label{subfig:kld-letter}
}\hspace*{-5pt}
\subfigure[COVT]{
\includegraphics[width=1.5in]{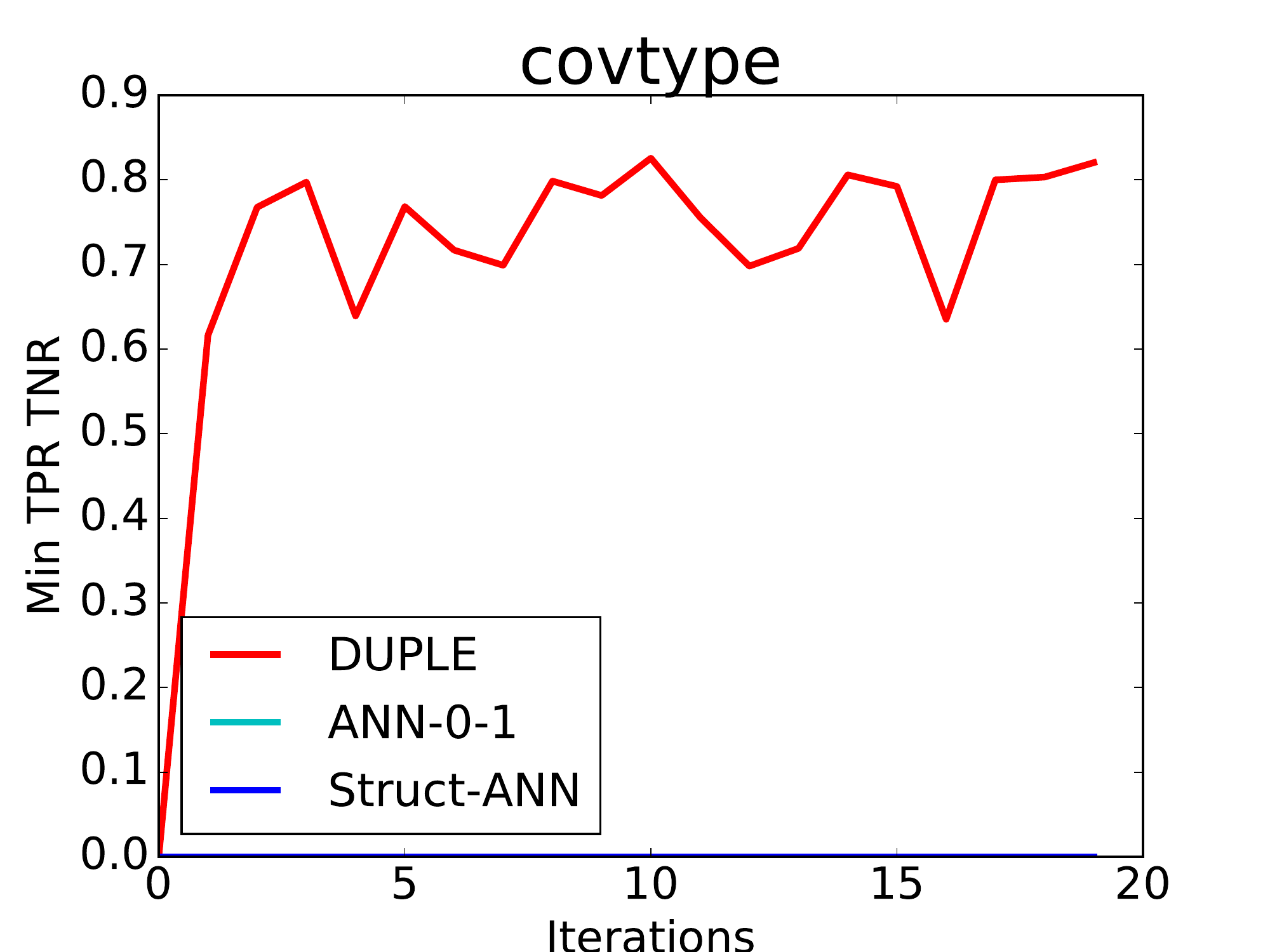}
\label{subfig:kld-covt}
}\hspace*{-5pt}
\subfigure[IJCNN]{
\includegraphics[width=1.5in]{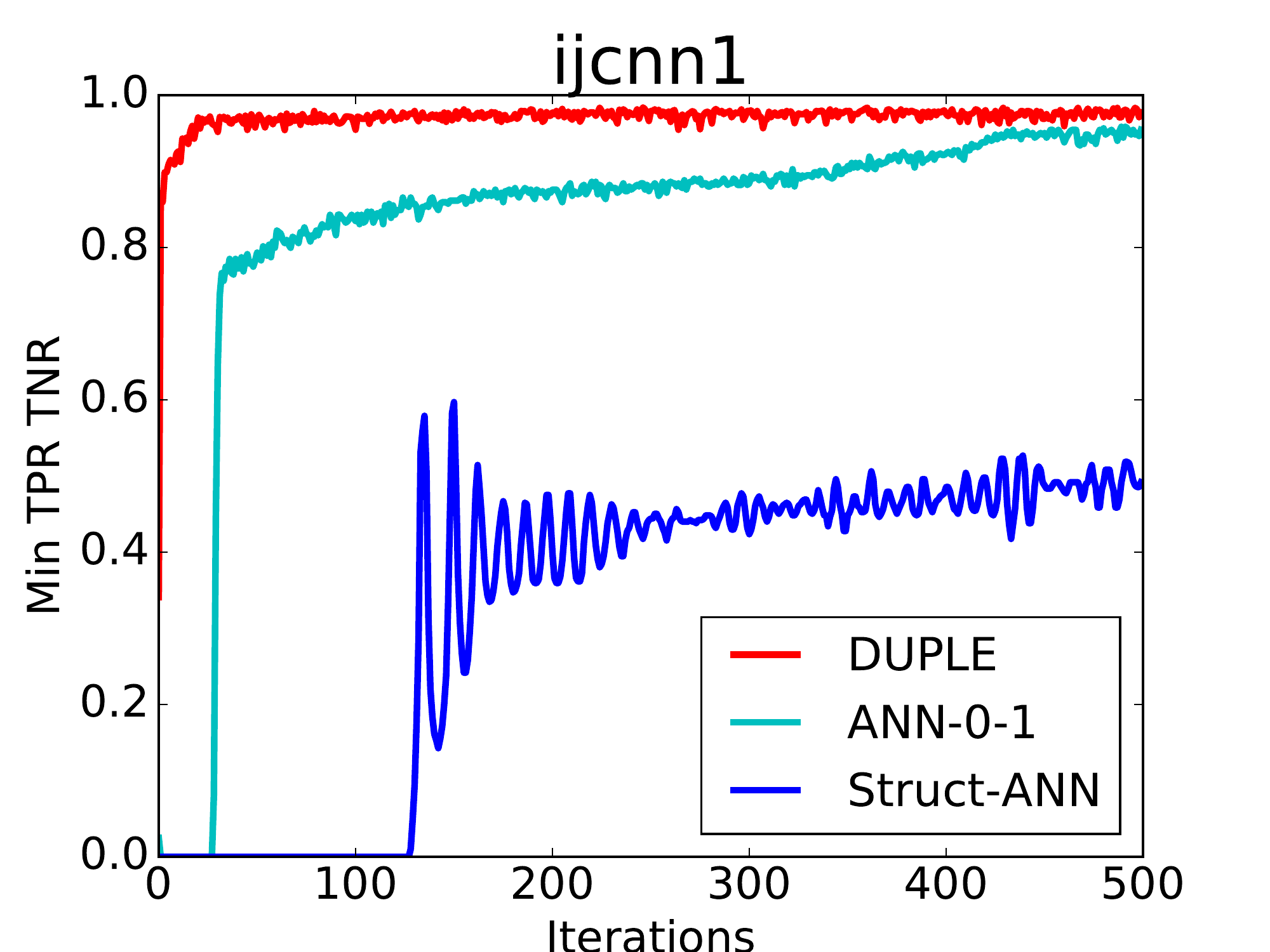}
\label{subfig:kld-ppi}
}

\caption{Experiments on maximizing MinTPRTNR, a concave performance measure}
\label{fig:MinTPRTNR}
\end{figure*}


\begin{figure*}[h!]
\centering
\subfigure[PPI]{
\includegraphics[width=1.5in]{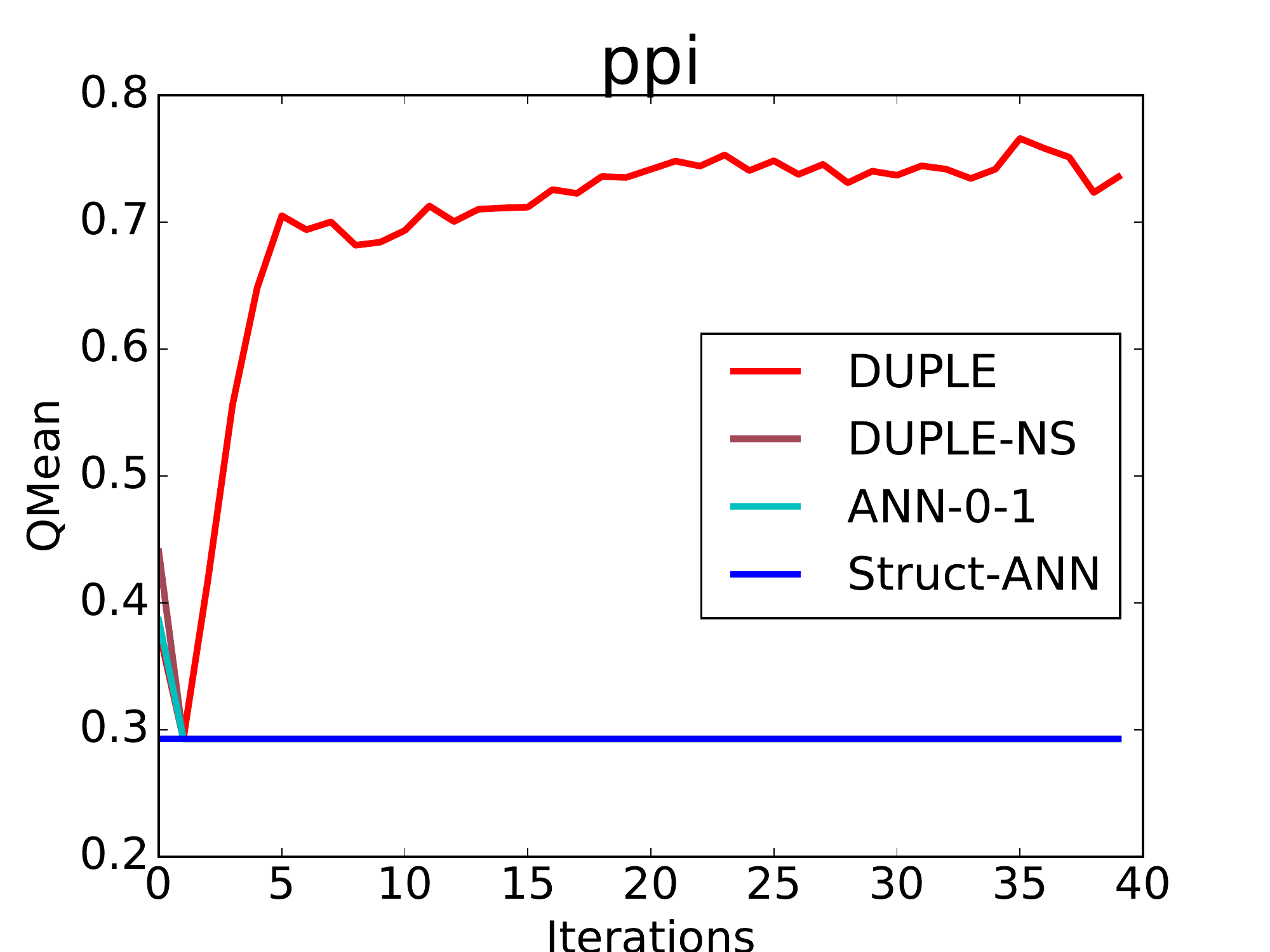}
\label{subfig:kld-ppi}
}\hspace*{-5pt}
\subfigure[KDD08]{
\includegraphics[width=1.5in]{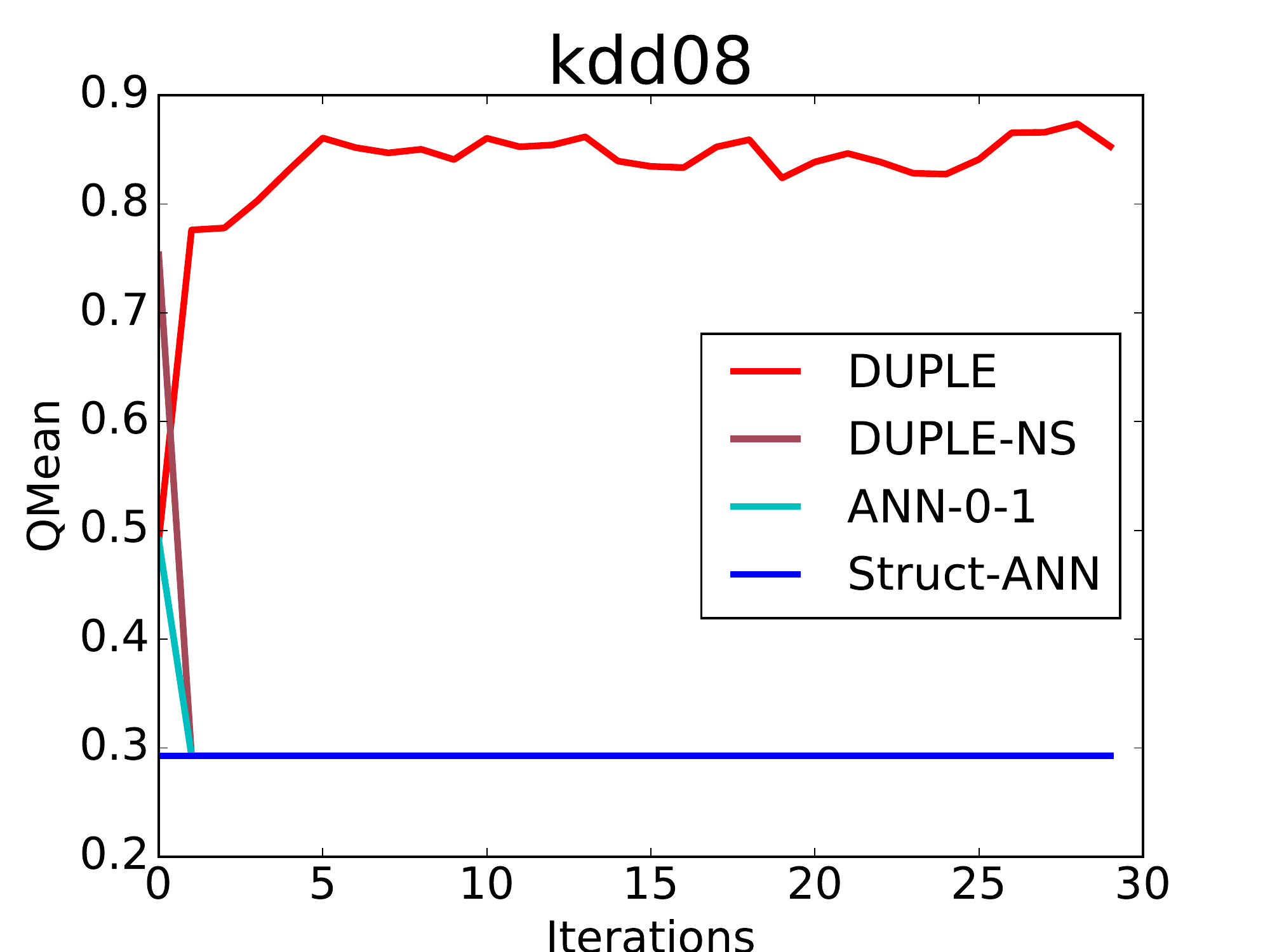}
\label{subfig:kld-kdd08}
}\hspace*{-5pt}
\subfigure[IJCNN1]{
\includegraphics[width=1.5in]{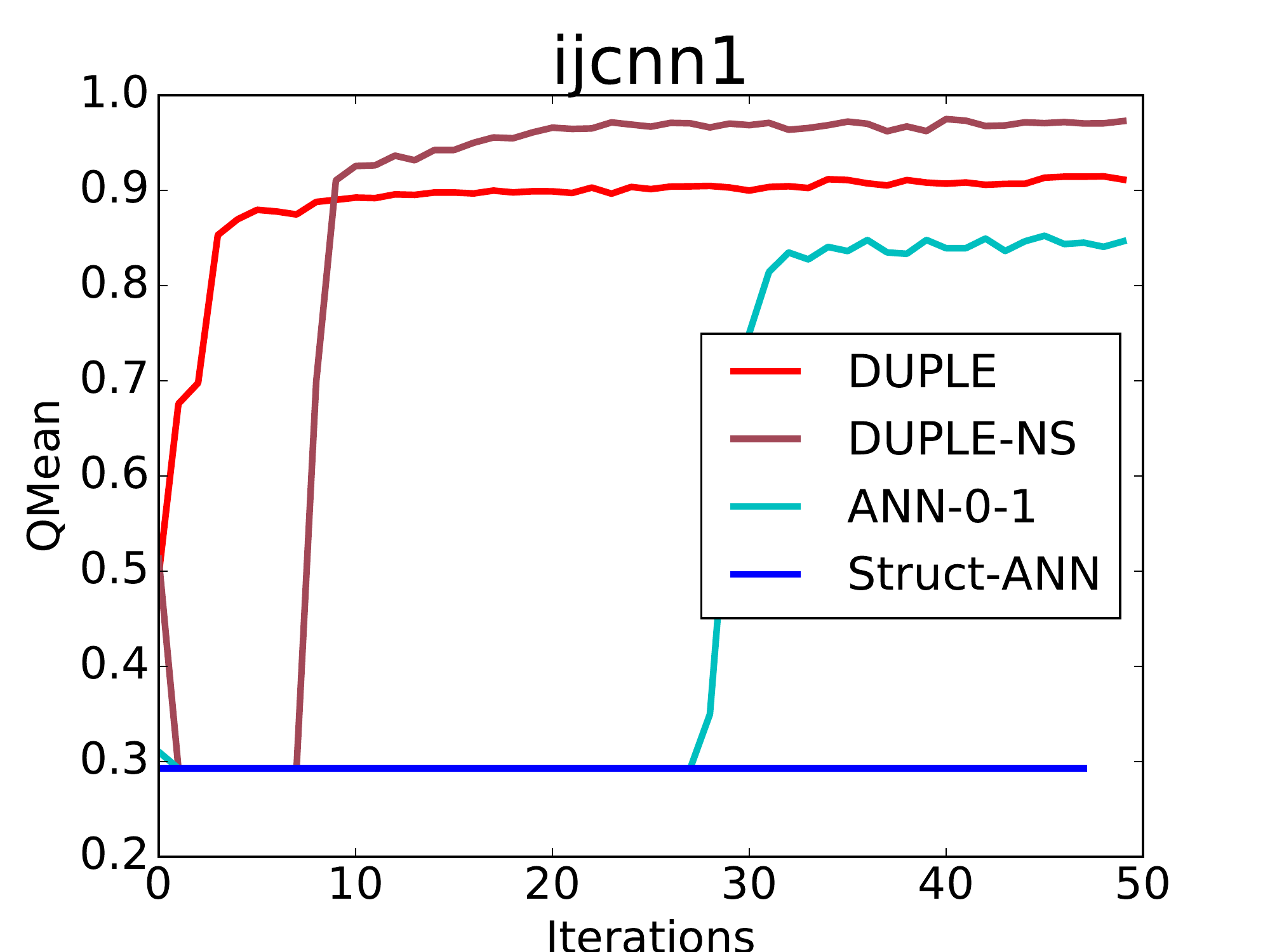}
\label{subfig:kld-covt}
}\hspace*{-5pt}
\subfigure[A9A]{
\includegraphics[width=1.5in]{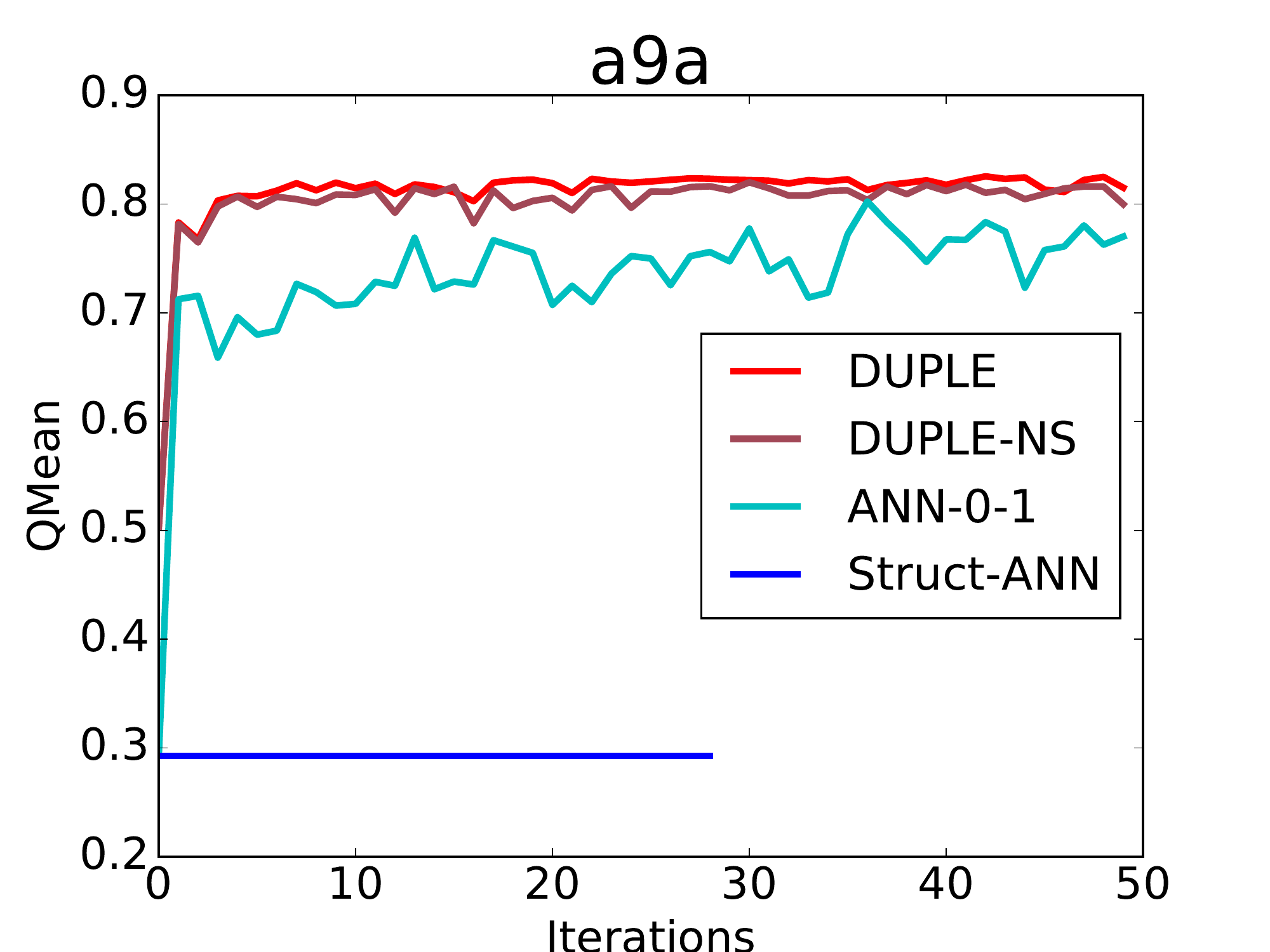}
\label{subfig:kld-ppi}
}\hspace*{-5pt}

\caption{Experiments on maximizing QMean, a concave performance measure}

\label{fig:QMean}
\end{figure*}

\begin{figure*}[h!]
\centering
\subfigure[KDD08]{
\includegraphics[width=1.5in]{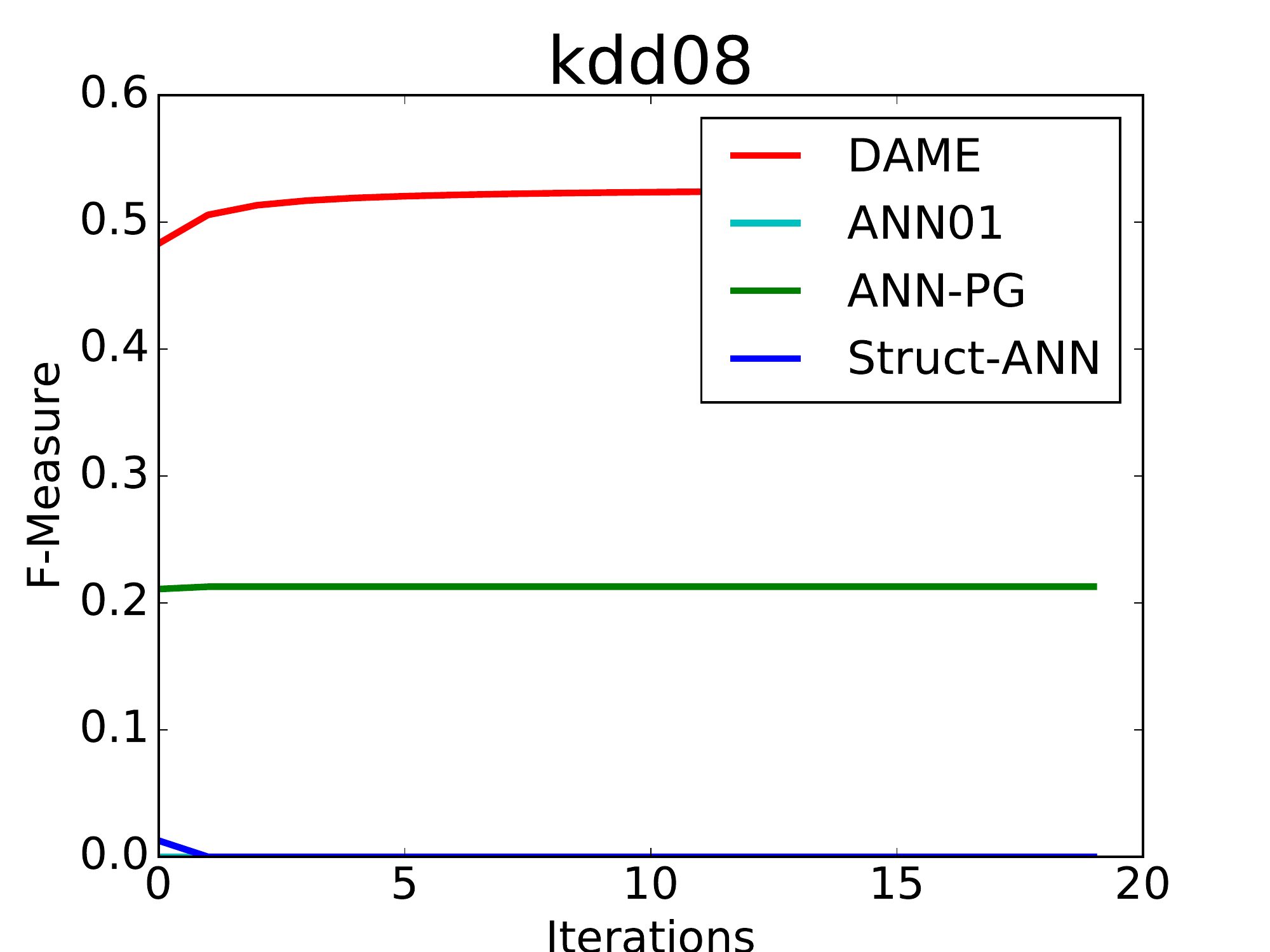}
\label{subfig:f1-kdd}
}\hspace*{-5pt}
\subfigure[COD-RNA]{
\includegraphics[width=1.5in]{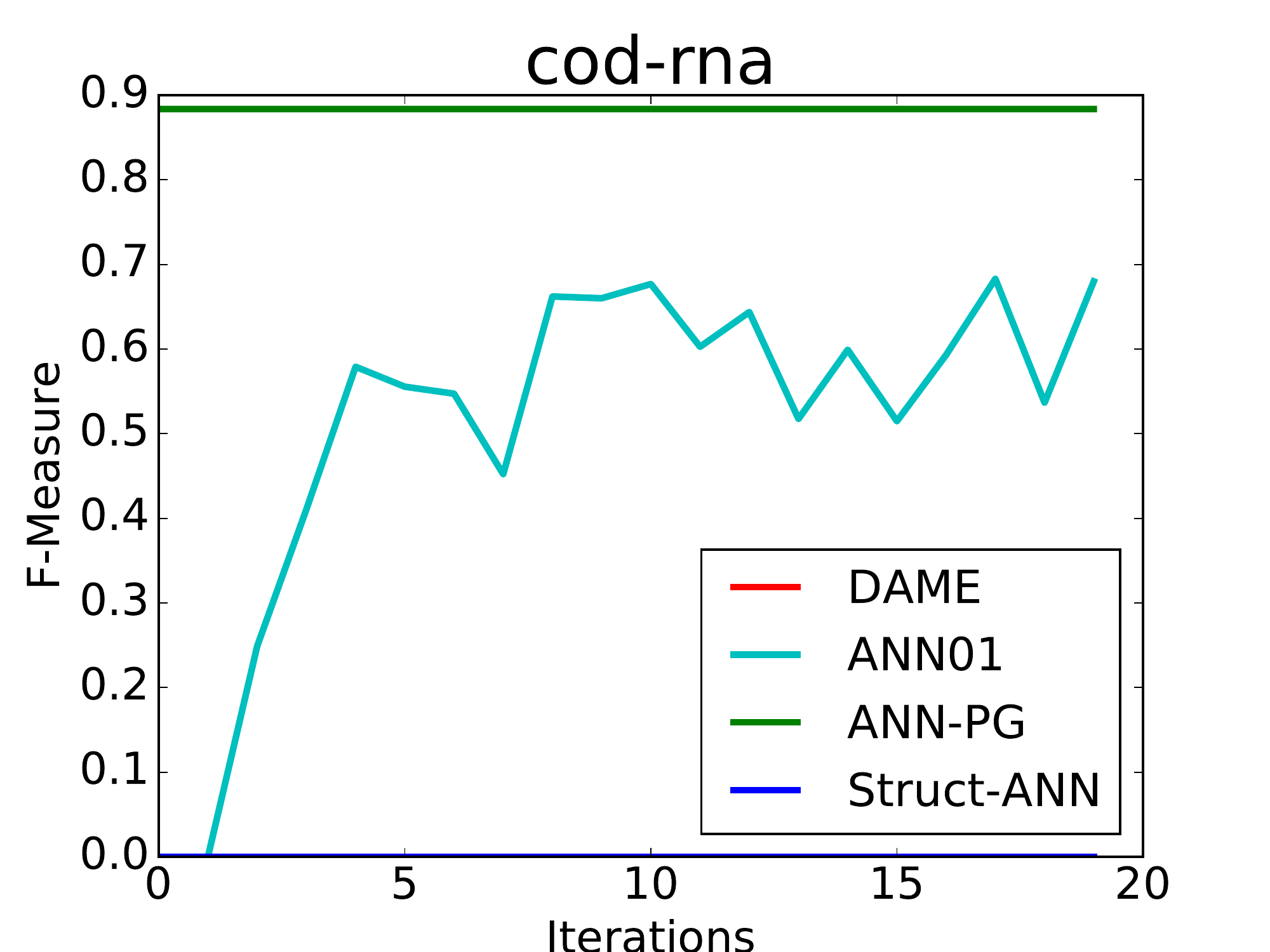}
\label{subfig:f1-cod}
}\hspace*{-5pt}
\subfigure[LETTER]{
\includegraphics[width=1.5in]{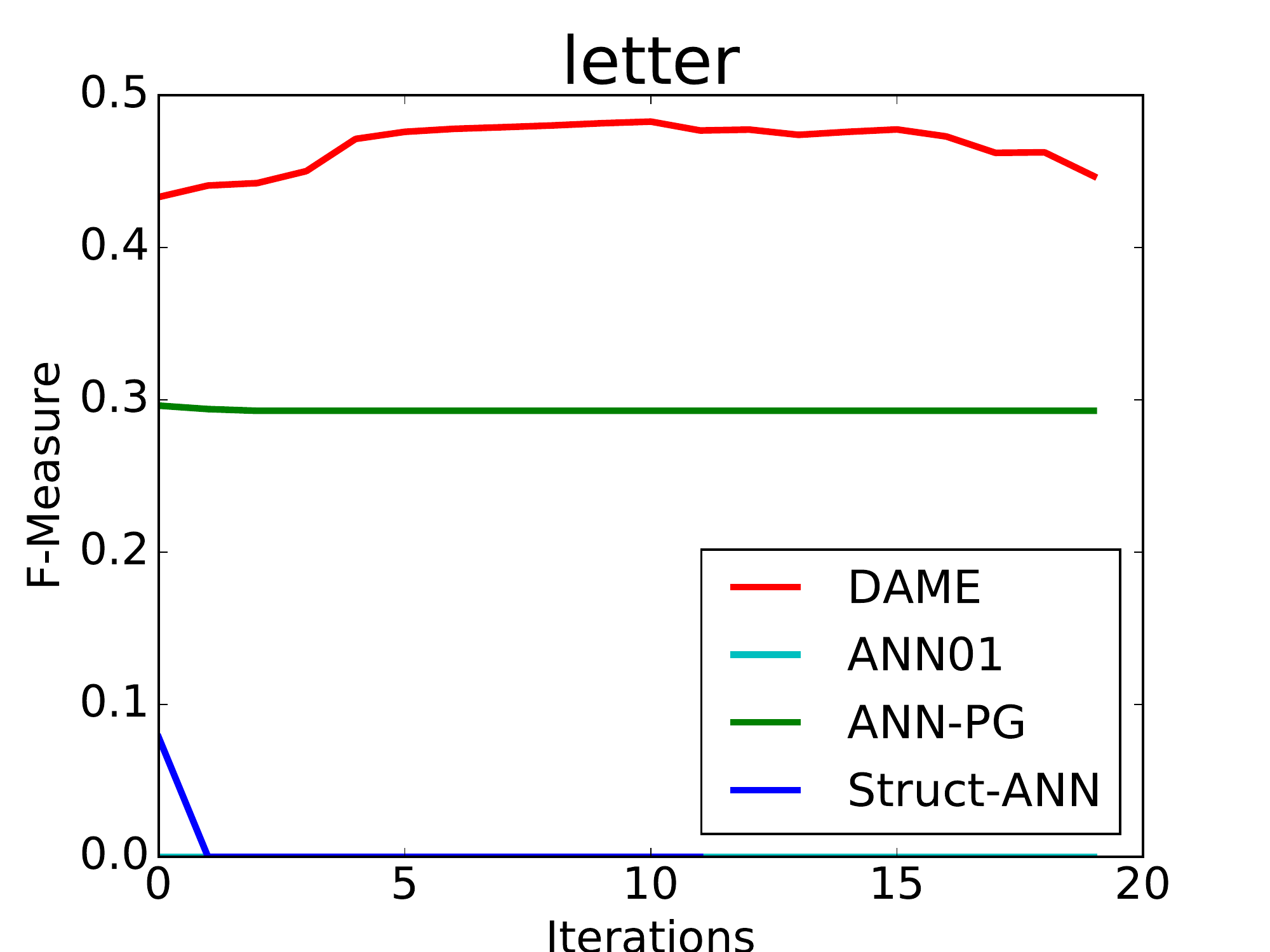}
\label{subfig:f1-letter}
}\hspace*{-5pt}
\subfigure[A9A]{
\includegraphics[width=1.5in]{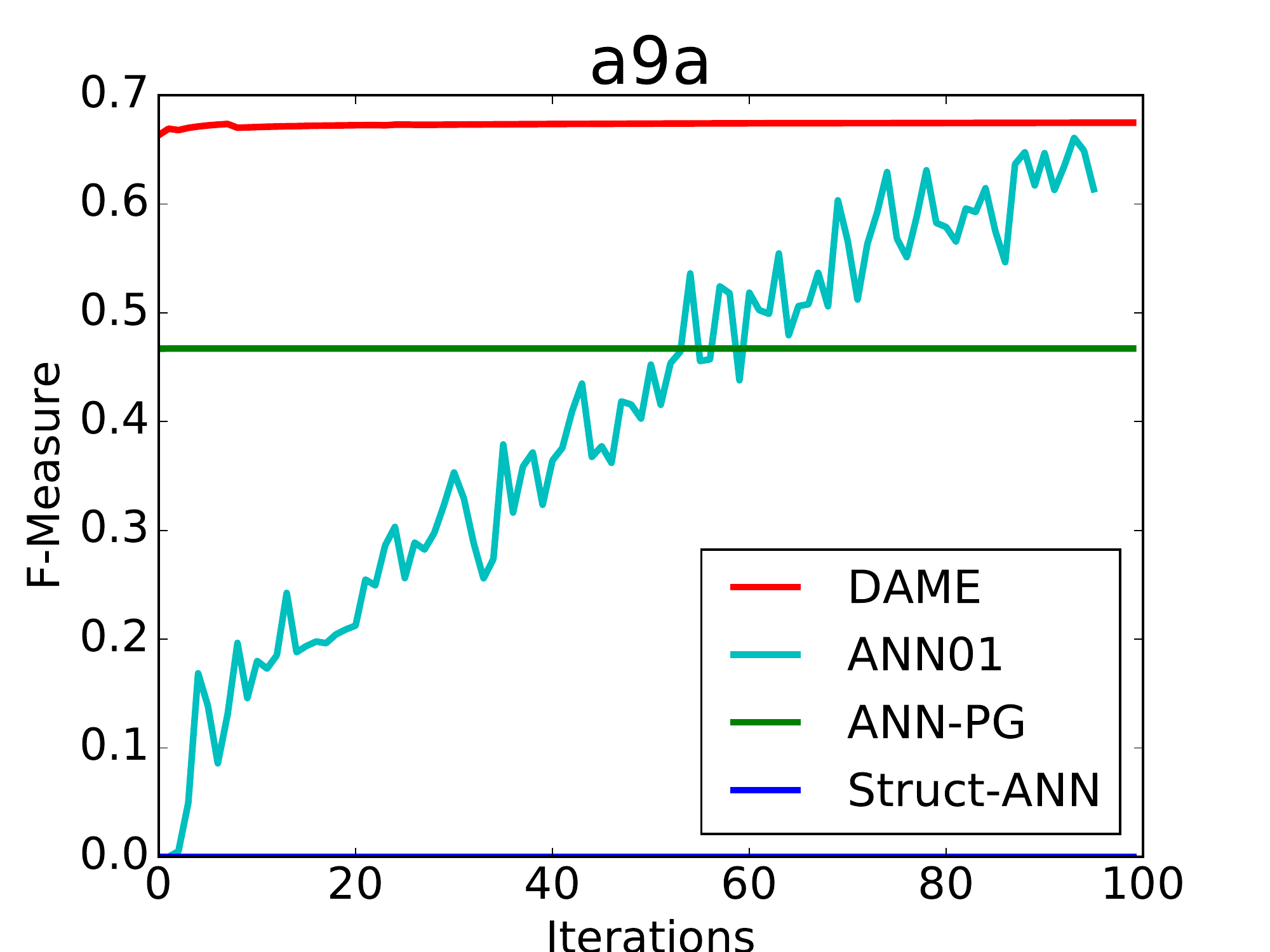}
\label{subfig:f1-a9a}
}
\caption{Experiments on maximizing F-measure, a pseudolinear performance measure}
\label{fig:F1}
\end{figure*}

\subsection{Experiments with Nested Performance Measures}
In Figure~\ref{fig:KLD}, we can see the results obtained by \dnemsis while optimizing the \kld performance measure. It shows rapid convergence to near-perfect quantification (class ratio estimation) scores. The experiments also show that \dnemsis and \dnemsisns require far less iterations than its competitor \bench (whenever \bench is successful at all). The \struct benchmark is not shown for these experiments since it always got a value close to 0 by trivially predicting every data point as negative as the datasets are highly biased.
\begin{figure*}[h!]
\centering
\subfigure[PPI]{
\includegraphics[width=1.5in]{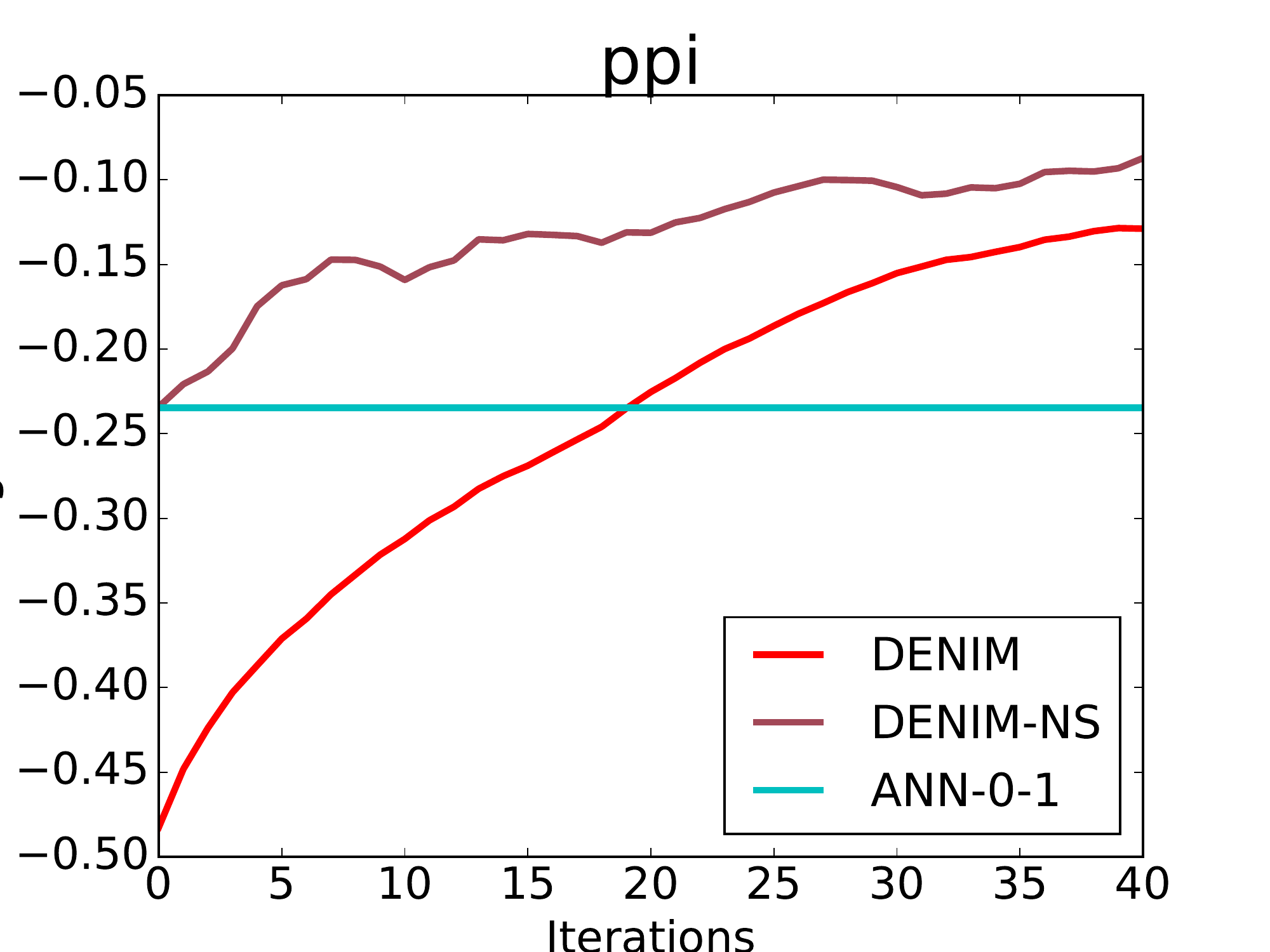}
\label{subfig:kld-ppi}
}\hspace*{-2pt}
\subfigure[Letter]{
\includegraphics[width=1.5in]{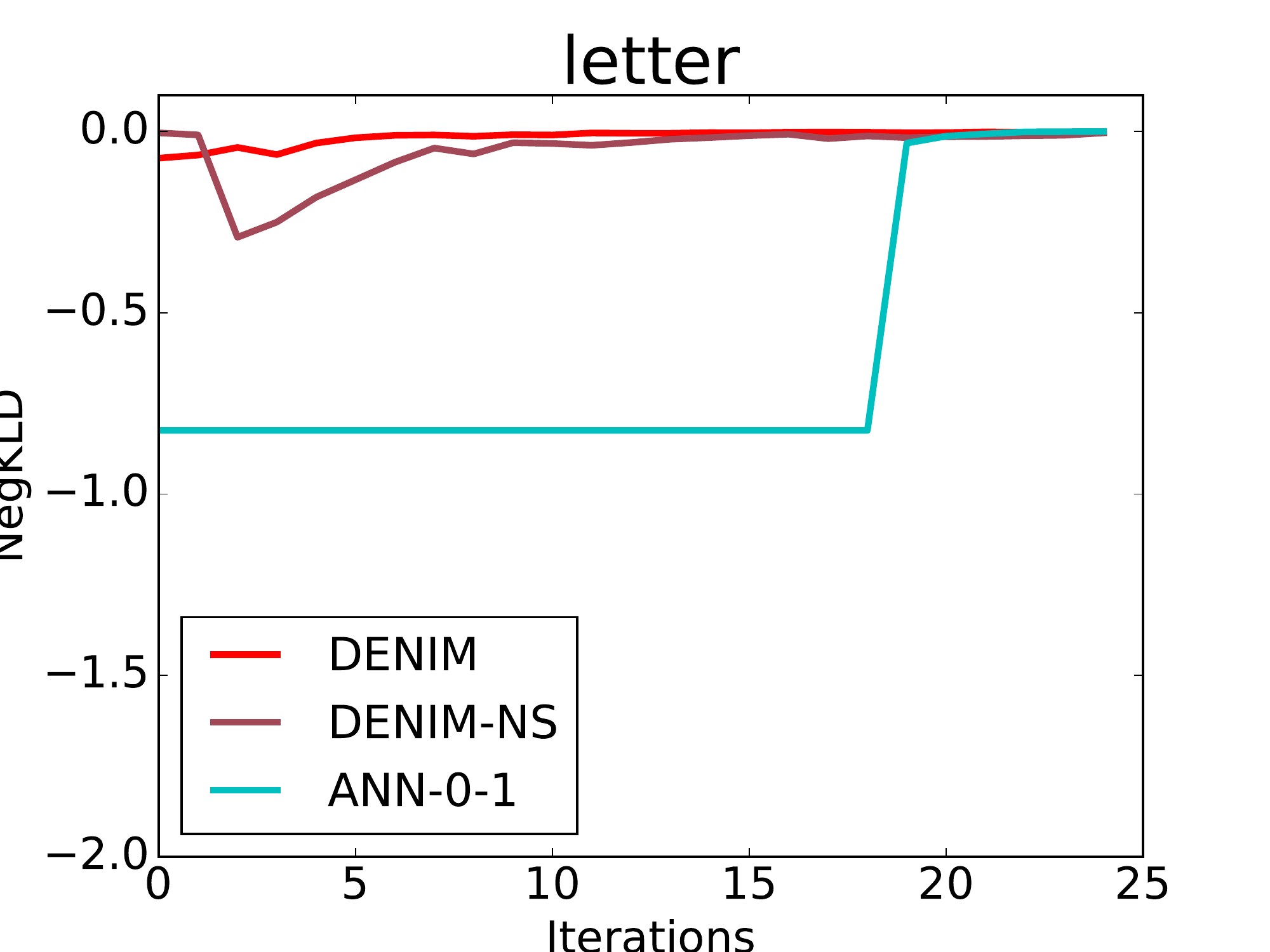}
\label{subfig:kld-ppi}
}\hspace*{-2pt}
\subfigure[COVT]{
\includegraphics[width=1.5in]{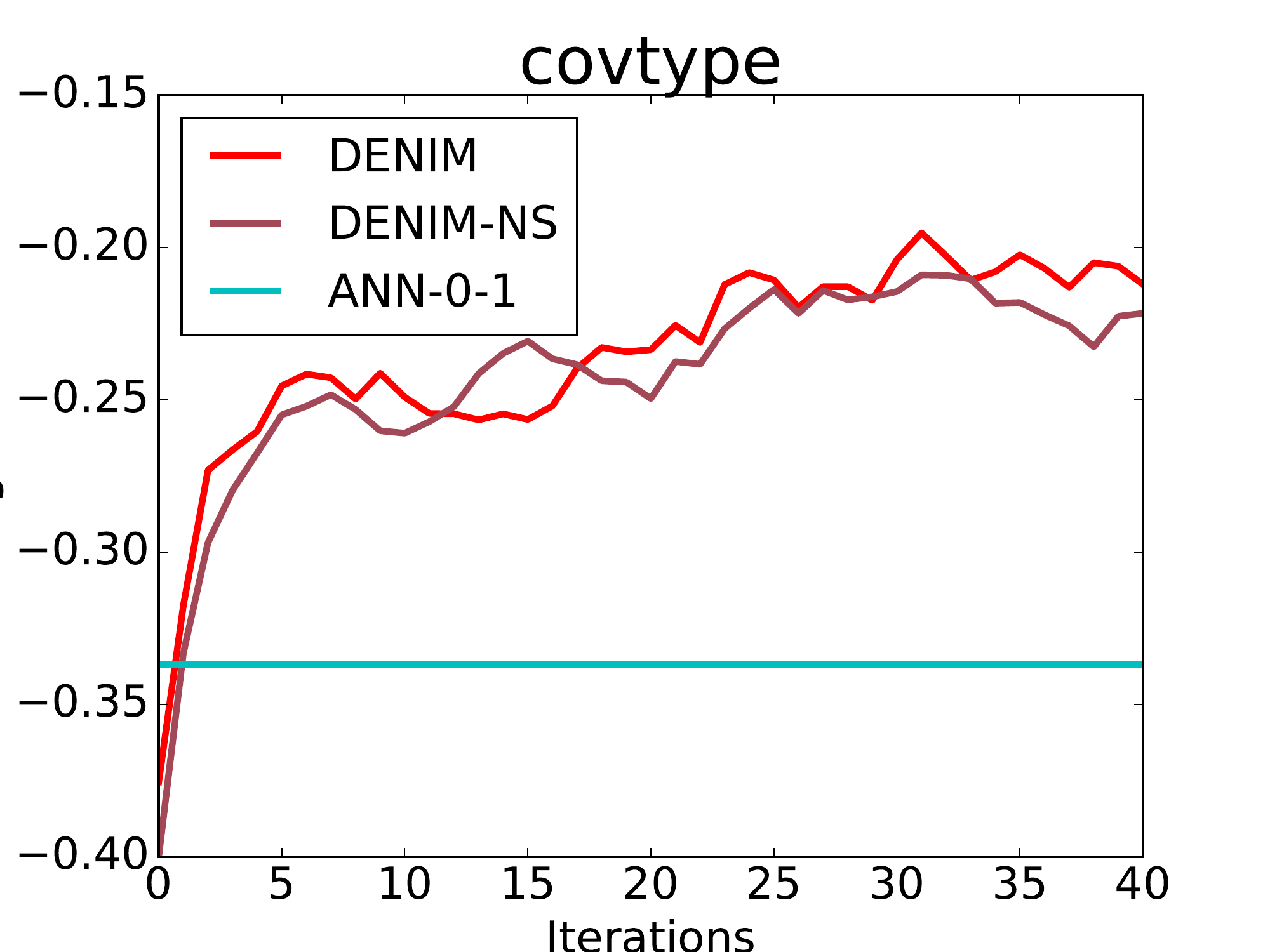}
\label{subfig:kld-a9a}
}\hspace*{-2pt}
\subfigure[IJCNN]{
\includegraphics[width=1.5in]{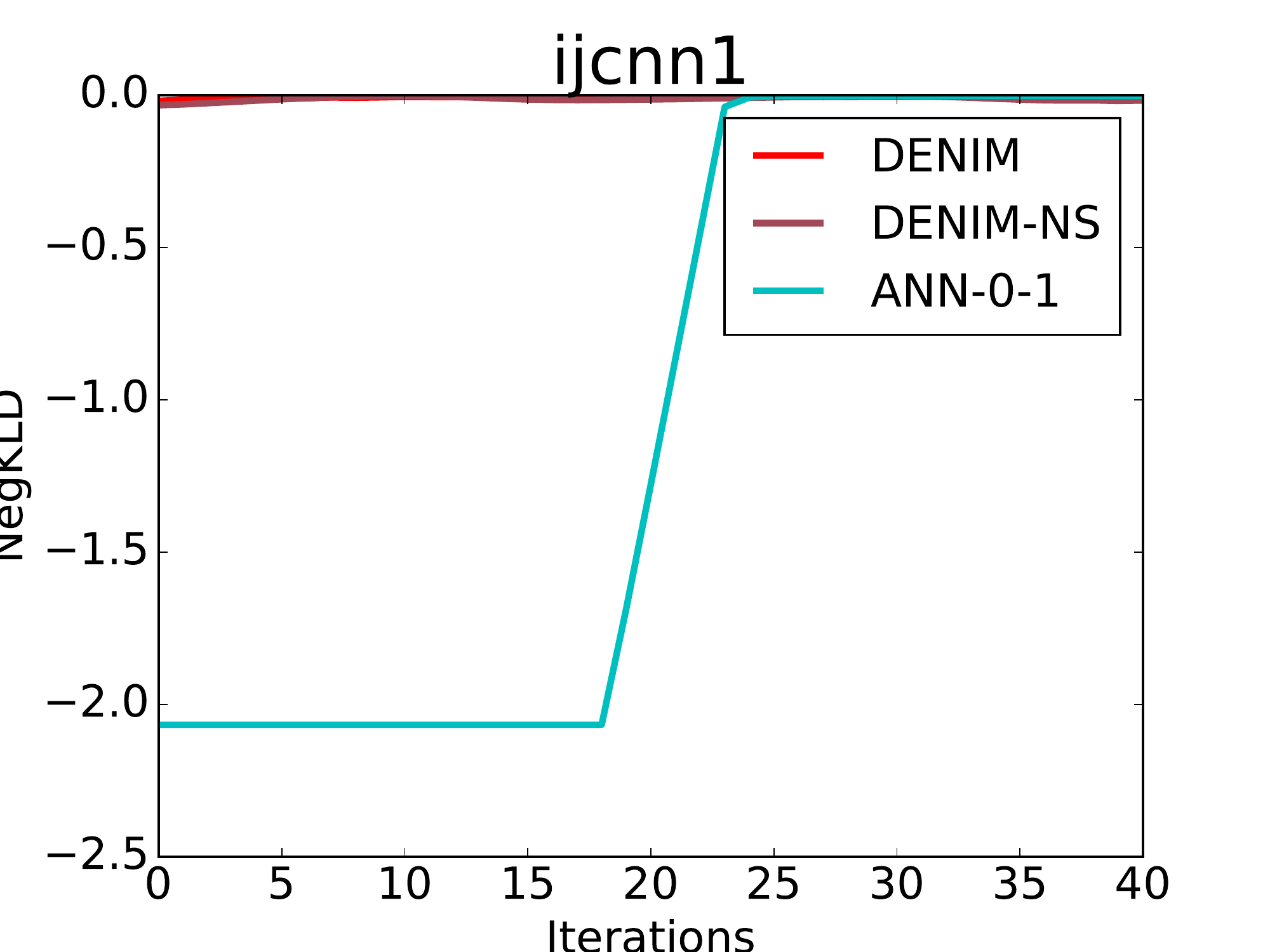}
\label{subfig:kld-ppi}
}\hspace*{-10pt}
\caption{Experiments on minimizing Kullback Leibler divergence, a nested concave performance measure}
\label{fig:KLD}
\end{figure*}

\subsection{Experiments with Pseudolinear Measures}
Figure~\ref{fig:F1} (in the Supplementary material) shows the performance of \damp on optimizing the F1-Measure. A naive training with misclassification loss yields extremely poor F-measure performance. Moreover, plug-in methods such as those proposed in \cite{KoyejoNRD14} linear models also perform very poorly. \damp on the other hand is able to rapidly offer very good F-measure scores after looking at a fraction of the total data.

As, it can be seen, \struct offers a consistently poor performance whereas it seems to perform well in the linear case. This is because our implementation of the \textbf{Struct ANN} is a minibatch method and the gradient obtained from the structual method has almost no real information due to this. In other variants of the application \cite{song2016} of the \struct algorithm, full batch methods were used. We would like to point out that the use of the entire training dataset for every update is extremely expensive with respect to memory and computation time, especially when working with GPU architectures.

\subsection{Case Study: Quantification for Sentiment Analysis}
We report the results of experiments comparing the performance of the \dnemsis on a Twitter sentiment detection challenge problem. The task in this challenge was to ascertain correctly the fraction of tweets exhibiting various sentiments. The performance was measured using the Kullback-Leibler divergence (see \eqref{eq:KLD2}). We trained an end-to-end DeepLSTM model trained using \dnemsis. We also trained an attention-enabled network for the same task using the \dnemsis. Our models accepted raw text in the standard one-hot encoding format and performed task specific optimization and generated task specific vocabulary embeddings. Our representations were 64-dimensional and were learnt jointly with other network parameters.

\noindent\textbf{Implementation details}: All our LSTM models used a single hidden layer with 64 hidden nodes, which gave rise to 64-dimensional hidden state representations. For the LSTM model, the final label was obtained by applying a linear model with a logistic wrapper function. For the attention models (referred to as \textbf{AM}), the decoder hidden states were set to be 64-dimensional as well. The alignment model was set to be a feed-forward model with a softmax layer. Step lengths were tuned using standard implementations of the ADAM method. Training was done by adapting the \dnemsis method.

\dnemsis is able to obtain near perfect quantification on both LSTM (\kld = 0.007) as well as AM (\kld = 0.00002) models (see Figure~\ref{fig:conv}). In contrast, the classical cross-entropy method with attention model (\textbf{AM-CE}) is unable to obtain satisfactory performance. \dnemsis converges to optimal test \kld performance in not only far lesser iterations, but also by using far less data samples. Also note that the \textbf{AM} models trained with \dnemsis give \kld losses that are 2 orders of magnitude smaller than what LSTMs offer when trained with \dnemsis.

\begin{figure}[h!]
  \centering
    \subfigure[Convergence to optimal test \kld performance for different RNN models.]{
\includegraphics[height=1.3in]{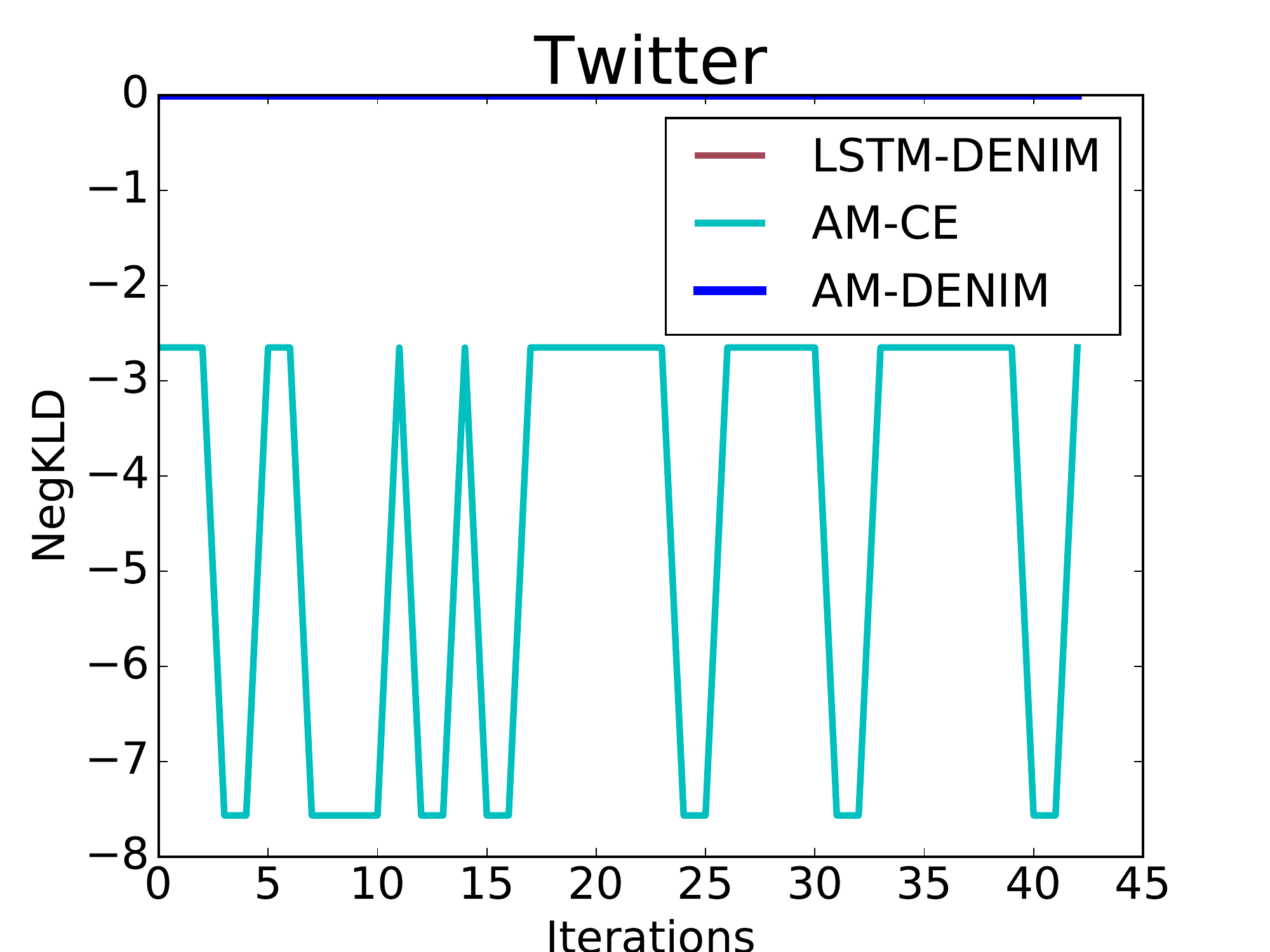}
\label{fig:conv}}
\hspace{15pt}
\subfigure[Change in Quantification performance with distribution drift.]{
  \includegraphics[height=1.2in]{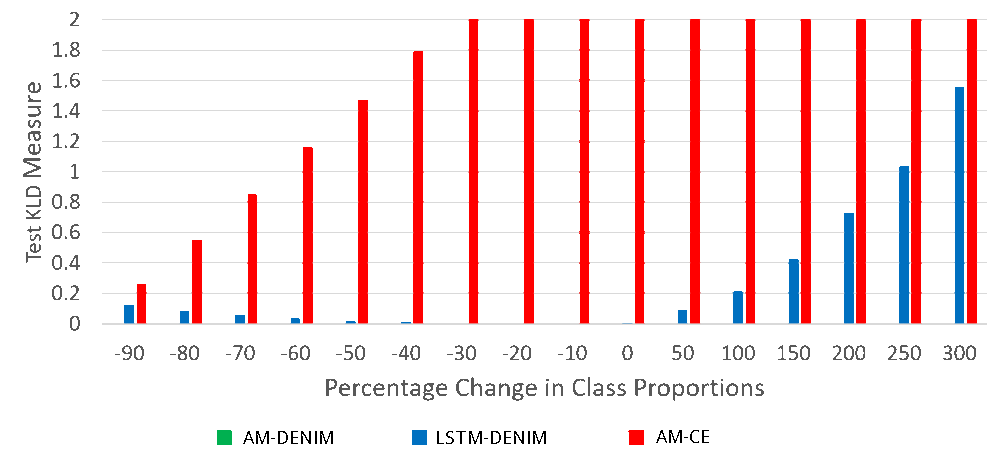}
  \label{fig:pvalLSTM}}
\caption{Results on the Twitter Sentiment Analysis Task}
\label{fig:twitter_plot}
\end{figure}

We also experiment with artificially changing the fraction of positive and negative examples in order to see the performance of our model under distribution drift (see Figure~\ref{fig:pvalLSTM}). The fraction of negatives and positives in the test set was distorted from their original values by resampling. As the test distribution priors are distorted more and more, \textbf{AM-CE} (Attention Model trained with Cross Entropy) performs extremely poorly. \dnemsis with LSTMs displays some degree of robustness to drift but succumbs at extremely high level of drift. \dnemsis with AM models on the other hand, remains extremely robust to even high degree of distribution drift, offering near-zero \kld error.

The benefits of the attention models employed by \dnemsis allow it to identify critical words in a tweet that clearly signal its polarity. The highlighted words (see Figure~\ref{fig:attention}) are those for which \dnemsis assigned an attention score $\alpha \approx 1$.

\begin{figure}[h]%
\centering
\tiny{
\begin{tabular}{l}
\hline
TGIF!! Make it a \colorbox{green!30}{great} day, Robbie!!\\\hline
Monsanto's Roundup \colorbox{red!30}{not} good for you\\\hline
I may be in \colorbox{green!30}{love} with Snoop\\\hline
anyone having problems with Windows 10? may be coincidental\\
\colorbox{red!30}{but} since i downloaded, my WiFi keeps dropping out.\\\hline
@NariahCFC \colorbox{red!30}{against} barca pre season stand out player 1st half.. \\\hline
@alias8818 Hey \colorbox{green!30}{there!} We're excited to have you as part of the\\
 T-Mobile family!\\\hline
listening to Fleetwood Mac and having my candles lit is the\\
 \colorbox{green!30}{perfect} Sunday evening\\\hline
\end{tabular}
}
\caption{Figuring where the attention is. Highlighted words got high attention scores. A red (green) highlight indicates that the tweet was tagged with a negative (positive) sentiment.}%
\label{fig:attention}%
\end{figure}




\bibliographystyle{spmpsci}      
\bibliography{Fabrizio,refs-icml15-tale-of-two-classes,newrefs}

\appendix

\normalsize

\section{Proof of Theorem~\ref{thm:dspade-conv}}
\label{app:dspade-proof}
\begin{theorem}
Consider a concave performance measure defined using a link function $\Psi$ that is concave and $L'$-smooth. Then, if executed with a uniform step length satisfying $\eta < \frac{2}{L}$, then \dspade $\epsilon$-stabilizes within $\softO{\frac{1}{\epsilon^2}}$ iterations. More specifically, within $T$ iterations, \dspade identifies a model $\vw^t$ such that $\norm{\nabla^t}_2 \leq \bigO{\sqrt{L'\frac{\log T}{T}}}$.
\end{theorem}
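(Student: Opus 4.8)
The plan is to view \dspade as gradient ascent on a \emph{slowly drifting} augmented objective, and to show the drift is small enough that a standard descent-lemma telescoping still forces the gradient to vanish at the advertised rate. I would carry out the analysis for the full-batch version (as is done for \damp in Appendix~\ref{app:damp}), so that the only quantity changing between iterations is the dual pair $(\alpha^t,\beta^t)$; the mini-batch case then follows by treating the batch gradients as unbiased estimates and folding their fluctuation into the $\softO{\cdot}$. The first step is to record two structural facts about the duals. Writing $\bar P_t,\bar N_t$ for the running reward averages $r_+/n_+,\,r_-/n_-$ maintained by the algorithm, the dual step computes exactly the minimizer in the Fenchel representation \eqref{eq:222} of $\Psi(\bar P_t,\bar N_t)$; by Danskin's theorem / Fenchel--Young optimality for a differentiable concave $\Psi$ this minimizer is $(\alpha^t,\beta^t) = \nabla\Psi(\bar P_t,\bar N_t)$. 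Since $\Psi$ is $L'$-smooth on the compact confusion domain $[0,1]^2$, its gradient is bounded there, so the duals remain in a fixed bounded set; and since each running average moves by only $\bigO{1/t}$ when the $t$-th batch is folded in, $L'$-smoothness gives the key slow-variation estimate
\[
\norm{(\alpha^t,\beta^t) - (\alpha^{t-1},\beta^{t-1})} = \norm{\nabla\Psi(\bar P_t,\bar N_t) - \nabla\Psi(\bar P_{t-1},\bar N_{t-1})} \leq \bigO{L'/t}.
\]
Boundedness of the duals together with $L$-smoothness of the rewards $r(f(\vx;\cdot),y)$ makes each $g(\cdot;S,\alpha^{t-1},\beta^{t-1})$ an $L$-smooth function of $\vw$ (absorbing an absolute constant into $L$), which is what licenses the step restriction $\eta < 2/L$.

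Next I would invoke the descent lemma on the primal step. For the duals held fixed at iteration $t$, gradient ascent with $\eta < 2/L$ yields
\[
g(\vw^t;\alpha^{t-1},\beta^{t-1}) \geq g(\vw^{t-1};\alpha^{t-1},\beta^{t-1}) + \eta\br{1 - \tfrac{L\eta}{2}}\norm{\nabla^{t-1}}_2^2.
\]
The objective then changes only because the duals are refreshed; but since $g$ is linear in $(\alpha,\beta)$ with bounded coefficients $\hat P_S,\hat N_S$, the slow-variation bound controls this: $\bigl|g(\vw^{t-1};\alpha^{t-1},\beta^{t-1}) - g(\vw^{t-1};\alpha^{t-2},\beta^{t-2})\bigr| = \bigO{L'/t}$. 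Setting $A_t := g(\vw^t;\alpha^{t-1},\beta^{t-1})$, the two displays combine into a telescoping inequality $A_t - A_{t-1} \geq \eta\br{1-\tfrac{L\eta}{2}}\norm{\nabla^{t-1}}_2^2 - \bigO{L'/t}$.

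Finally I would sum from $t=1$ to $T$. The left side telescopes to $A_T - A_0$, which is bounded by an absolute constant because $g$ is bounded (bounded duals, bounded rewards); the drift terms sum to $\sum_t \bigO{L'/t} = \bigO{L'\log T}$. Rearranging, and using that $\eta\br{1-\tfrac{L\eta}{2}}$ is a positive constant, gives $\sum_{t\le T}\norm{\nabla^{t-1}}_2^2 = \bigO{L'\log T}$, so the best iterate obeys $\min_{t\le T}\norm{\nabla^t}_2 \leq \sqrt{\tfrac1T\sum_t\norm{\nabla^t}_2^2} = \bigO{\sqrt{L'\log T/T}}$, which after reindexing is the claimed bound; the $\softO{1/\epsilon^2}$ iteration count follows by setting the right-hand side equal to $\epsilon$.

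The step I expect to be the main obstacle is controlling the \emph{non-stationarity} of the objective introduced by the dual updates: one must simultaneously (i) identify the dual iterate as $\nabla\Psi$ evaluated at the running confusion averages, (ii) argue those averages, and hence the duals, drift by only $\bigO{1/t}$ per step, and (iii) ensure the drift terms are summable up to a benign $\log T$ factor rather than accumulating linearly. Establishing boundedness of the dual domain --- so that both the range of $g$ and its effective smoothness constant are finite --- is the other place where care is needed, and is precisely where the structural hypotheses on $\Psi$ (concavity, $L'$-smoothness, compact confusion domain) are actually used. Notably, \emph{no} concavity of the reward or scoring functions in $\vw$ is invoked anywhere, which is what lets the argument cover genuine deep models.
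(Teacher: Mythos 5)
Your proposal is correct, and it reproduces the paper's top-level decomposition exactly: apply the descent lemma to the primal ascent step with the duals frozen, telescope the augmented objective, and show that the total error injected by refreshing the duals is $\bigO{L'\log T}$, after which the averaging argument gives $\min_t\norm{\nabla^t}_2 \leq \bigO{\sqrt{L'\log T/T}}$. Where you genuinely diverge is in how that $\bigO{L'\log T}$ term is obtained. The paper treats the dual updates as an instance of follow-the-leader and invokes the logarithmic regret bound for FTL on strongly convex losses, using the fact that $L'$-smoothness of $\Psi$ makes the dual objective $\frac{1}{L'}$-strongly convex; the dual-refresh residuals $F(\vw^t,\valpha^{t-1}) - F(\vw^t,\valpha^t)$ are then controlled as a forward regret. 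You instead open up the closed form of the dual step: by Fenchel--Young optimality the dual iterate is $\nabla\Psi$ evaluated at the running confusion averages, those averages move by $\bigO{1/t}$ per iteration, $L'$-Lipschitzness of $\nabla\Psi$ transfers this to a $\bigO{L'/t}$ drift of the duals, and linearity of $g$ in $(\alpha,\beta)$ with bounded coefficients converts it into a $\bigO{L'/t}$ per-step perturbation of the objective, summing to $\bigO{L'\log T}$. The two routes prove the same intermediate fact --- indeed your stability argument is essentially the engine inside the standard FTL logarithmic-regret proof, unrolled --- but yours is more elementary and self-contained, makes explicit where $L'$ and the $\log T$ enter, and forces you to record the boundedness of the duals (needed for the effective smoothness constant and the boundedness of $g$), a point the paper's proof leaves implicit. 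The paper's FTL framing, in exchange, is more modular: it would survive a $\Psi$ that is merely concave and handled by subgradients, or a dual step that is only an approximate minimizer, situations where the clean identity $(\alpha^t,\beta^t)=\nabla\Psi(\bar P_t,\bar N_t)$ is unavailable.
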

\begin{proof}
Recall that we assume that the reward functions $r(f(\vx;\vw),y)$ are $L$-smooth functions of the model $\vw$. This is satisfied by all reward functions we consider. Note, however, that nowhere will we assume that the reward functions are concave in the model parameters. We will use the shorthand $\nabla^t = \nabla_\vw g(\vw^t; S_t, \alpha^t, \beta^t)$ and $F(\vw^t,\valpha^t) = g(\vw^t; S_t, \alpha^t, \beta^t)$. We will prove this result for the batch version of the \dspade algorithm for the sake of simplicity and to present the key ideas. The extension to the mini-batch version is straightforward and will introduce an additional error of the order of $\frac{1}{\sqrt b}$ where $b$ is the batch size.

The batch version of \dspade makes the following model update $\vw^{t+1} = \vw^t + \eta\cdot\nabla^t$. Using the smoothness of the reward functions, we get
\[
F(\vw^{t+1},\valpha^t) \geq F(\vw^t,\valpha^t) + \ip{\nabla^t}{\vw^{t+1}-\vw^t} - \frac{L}{2}\norm{\vw^{t+1}-\vw^t}_2^2,
\]
which, upon rearranging, give us $\norm{\nabla^t}_2^2 \leq \frac{F(\vw^{t+1},\valpha^t) - F(\vw^t,\valpha^t)}{\eta\br{1 - \frac{\eta L}{2}}}$, which, upon summing up, gives us
{\[
\sum_{t=1}^T \norm{\nabla^t}_2^2 \leq \frac{1}{\eta\br{1 - \frac{\eta L}{2}}}\br{F(\vw^{T+1},\valpha^T) + \sum_{t=2}^T F(\vw^t,\valpha^{t-1}) - F(\vw^t,\valpha^t)}.
\]}%
However, by a forward regret-analysis of the dual updates which execute the follow-the-leader algorithm and the fact that due to the $L'$-smoothness of $\Psi$, the functions $F(\vw,\valpha)$ are $\frac{1}{L'}$ strongly convex,
\[
\sum_{t=2}^T F(\vw^t,\valpha^{t-1}) - F(\vw^t,\valpha^t) \leq \bigO{L'\log T}.
\]
This completes the proof upon applying an averaging argument.
\end{proof}


\section{\damp: A Deep Learning Technique for Pseudolinear Performance Measures}
\label{app:damp}
\begin{figure*}[t]
\centering
\subfigure[CT]{
\includegraphics[scale=0.25]{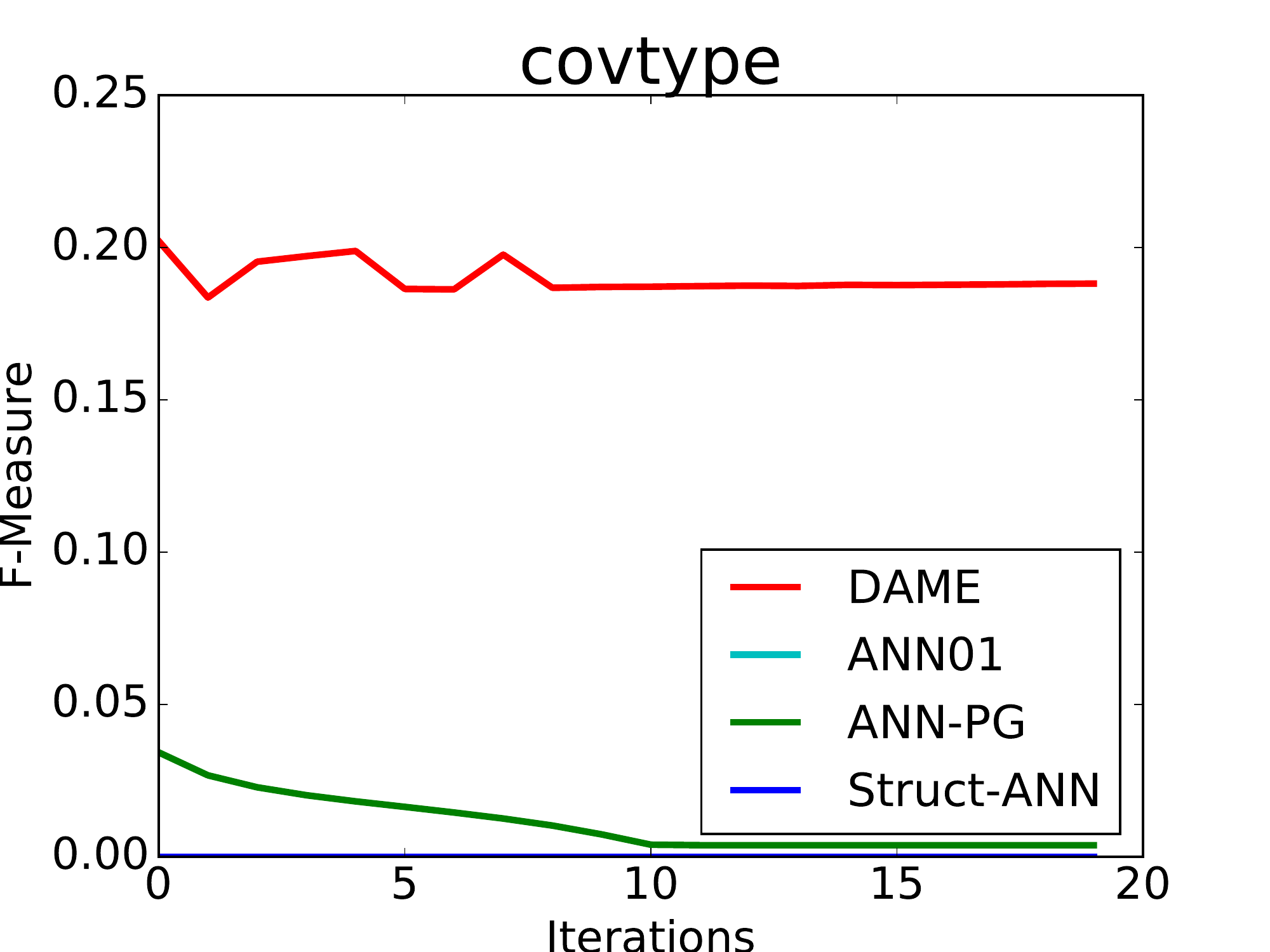}
\label{subfig:f1-cov}
}\hspace*{-5pt}
\subfigure[IJCNN]{
\includegraphics[scale=0.25]{kdd08_F1.pdf}
\label{subfig:f1-kdd}
}\hspace*{-5pt}
\subfigure[IJCNN1]{
\includegraphics[scale=0.25]{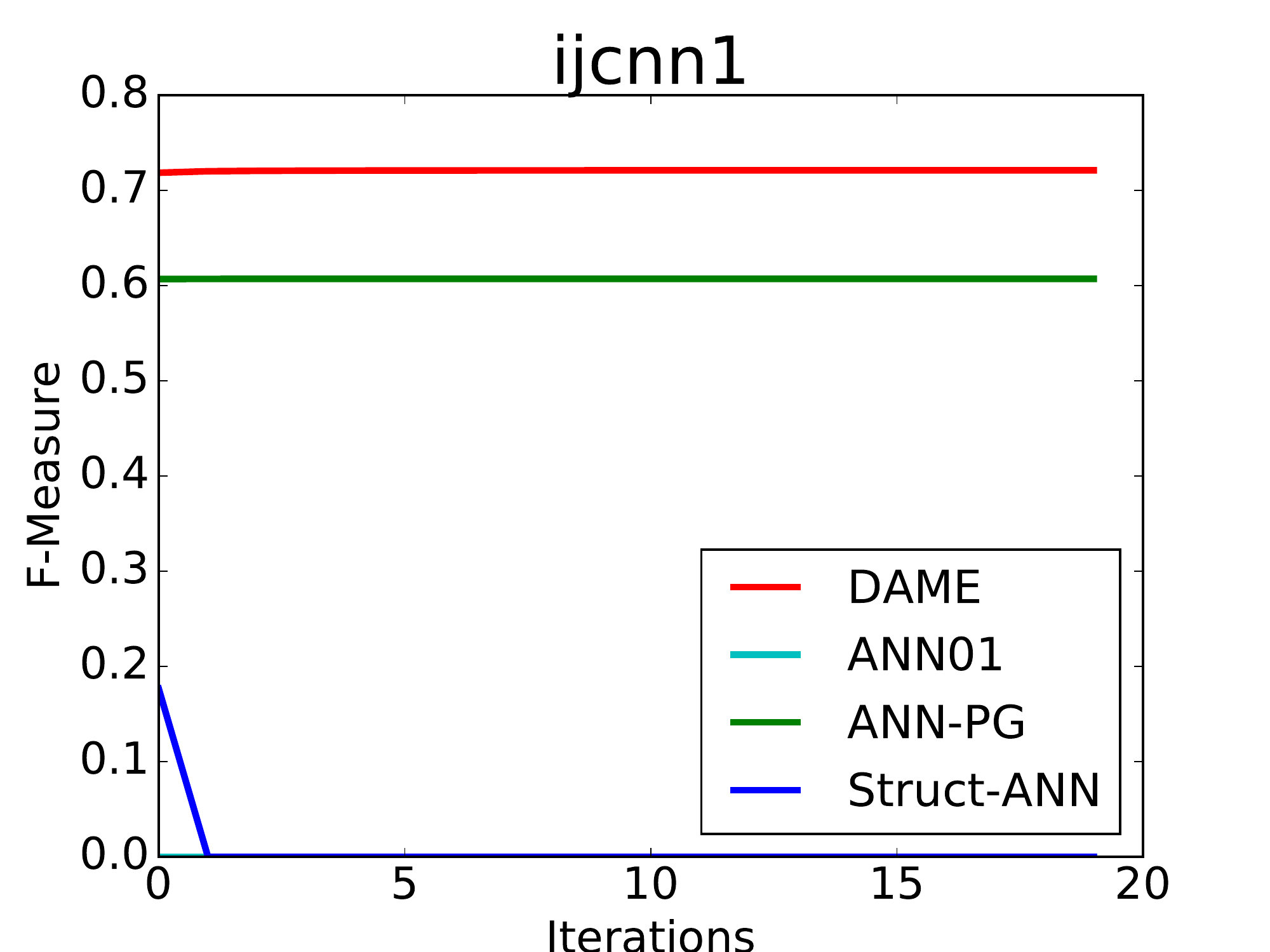}
\label{subfig:f1-ijcnn}
}\\
\hspace*{-5pt}
\subfigure[IJCNN1]{
\includegraphics[scale=0.25]{cod-rna_F1.pdf}
\label{subfig:f1-cod}
}\hspace*{-5pt}\subfigure[IJCNN1]{
\includegraphics[scale=0.25]{letter_F1.pdf}
\label{subfig:f1-letter}
}\hspace*{-5pt}
\subfigure[IJCNN1]{
\includegraphics[scale=0.25]{a9a_F1.pdf}
\label{subfig:f1-a9a}
}
\caption{Experiments with DAME}
\label{fig:F1}
\end{figure*}

We present an algorithm  for training deep models on pseudolinear performance measures such as F-measure. These are extremely popular in several areas and direct optimization routines are sought after. We know of only one proposed algorithm for training deep models with F-measure in the work of \cite{Eban17}. However, their algorithm involves constrained optimization routines over deep models and the authors do not discuss the details of implementing the same.

Our algorithm \damp, on the other hand, is based on an alternating strategy, is very scalable and gives superior performance across tasks and datasets. For sake of simplicity, we represent the pseudolinear performance measure as
\[
\cP_{(\va,\vb)}(\vw) = \frac{\cP_{\va}(\vw)}{\cP_{\vb}(\vw)} = \frac{a_0 + a_1\cdot \tpr(\vw) + a_2\cdot \tnr(\vw)}{b_0 + b_1\cdot \tpr(\vw) + b_2\cdot \tnr(\vw)}
\]
Given the above, we define the notion of a valuation function.
\begin{definition}[Valuation Function]
The valuation of a pseudolinear measure $\cP_{(\va,\vb)}(\vw)$ at any level $v > 0$, is defined as
\[
V(\vw,v) = \cP_{\va}(\vw) - v\cdot\cP_{\vb}(\vw)
\]
\end{definition}

We will use the notation $\cP_{(\va,\vb),S}(\vw^t)$ and $V_S(\vw,v)$ to denote respectively, the performance measure, and the valuation function as defined on a data sample $S$. At every time step $t$, \damp looks at $v^t = \cP_{(\va,\vb)}(\vw^t)$ and attempts to approximate the task of optimizing F-measure (or any other pseudolinear measure) using a cost weighted classification problem described by the valuation function at level $v^t$. After making updates to the model with respect to this approximation, \damp refines the approximation again, and so on.

We note that similar alternating strategies have been studied in literature in the context of F-measure before \cite{KoyejoNRD14,NarasimhanKJ2015} and offer provable convergence guarantees for linear models. However, a direct implementation of these methods gives extremely poor results as we shall see in the next section. The complex nature of these performance measures, that are neither convex nor concave, make it more challenging to train deep models.

To solve this problem, \damp utilizes a two-stage training procedure, involving pretraining the entire network (i.e. both upper and lower layers) on a standard training objective such as cross-entropy or least squares, followed by fine tuning of \emph{only the upper layers} of the network to optimize F-measure. The pretraining is done using standard stochastic mini-batch gradient descent.

The details of the algorithm are given in Algorithm~\ref{algo:damp}. For sake of simplicity we will let $(\vw_1,\vw^2)$ denote a stacking of the neural networks described by the models $\vw_1$ and $\vw_2$. More specifically $\vw_2$ denotes a network with input dimensionality $d_\text{in}$ and output dimensionality $d_\text{int}$ whereas $\vw_1$ denotes a network with input dimensionality $d_\text{int}$ and output dimensionality $d_\text{out}$. To ensure differentiability, \damp uses valuation functions with appropriate reward functions replacing the \tpr and \tnr functions.

We are able to show stronger \emph{local convergence} guarantees for \damp. Due to lack of space, we present only a sketch of the proof for the batch version i.e. $S_{t,i} = \tilde T$ for all time steps $t,i$, with constant step lengths. We will continue to assume that the valuation functions are $L$-smooth functions of the upper model. It is also noteworthy that we present the guarantee only for the fine-tuning phase since the pre-training phase enjoys local convergence guarantees by standard arguments. For this reason, we will omit the lower network in the analysis.

We will assume that the performance measure satisfies $\cP_{\va}(\vw) \leq M$ for all $\vw \in \cW$ and $\cdot\cP_{\vb}(\vw) \geq m$ for all $\vw \in cW$. We note that these assumptions are standard \cite{KarLNCS2016,NarasimhanKJ2015} and also readily satisfied by F-measure, Jaccard coefficient etc for which we have $m, M = \Theta(1)$ (see \cite{NarasimhanKJ2015}). Let $\kappa = 1 + M/m$.

To prove Theorem~\ref{thm:damp-conv}, we first show the following result. Since we have $\nabla_{\vw}\cP_{(\va,\vb)}(\vw) = \frac{\nabla_{\vw}V(\vw,\cP_{(\va,\vb)})}{\cP_{\vb}(\vw)}$, and $\cP_{\vb}(\vw) \geq m$, Theorem~\ref{thm:damp-conv} will follow

\begin{theorem}
If executed with a uniform step length satisfying $\eta < \frac{2}{L\kappa}$, then \damp discovers an $\epsilon$-stable model within $\bigO{\frac{1}{\epsilon^2}}$ inner iterations. More specifically, within $\frac{\kappa^2}{m}\frac{1}{\eta\br{1 - \frac{L\kappa\eta}{2}}\epsilon^2}$ iterations, \damp identifies a model $\vw^t_1$ such that $\norm{\nabla_{\vw^{t}_1} V_{\tilde S^t}((\vw^{t}_1,\vw^{t}_2),v^{t-1})}_2 \leq \epsilon$.
\end{theorem}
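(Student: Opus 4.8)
The plan is to treat \damp as a level-set (Dinkelbach-style) method wrapped around a non-convex smooth gradient ascent, and to prove convergence by combining a per-inner-loop descent lemma with a global telescoping argument over the outer loops that exploits the monotone improvement of the pseudolinear measure. First I would pin down the effective smoothness of the objective actually being optimized. Since $V(\vw,v) = \cP_\va(\vw) - v\cdot\cP_\vb(\vw)$ and both $\cP_\va,\cP_\vb$ are $L$-smooth in the upper model, $V(\cdot,v)$ is $L(1+\abs{v})$-smooth; because every level encountered satisfies $v^t = \cP_{(\va,\vb)}(\vw^{t-1}) \leq M/m$, this smoothness is at most $L\kappa$ with $\kappa = 1 + M/m$. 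This is exactly what forces the step-size restriction $\eta < \frac{2}{L\kappa}$.

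Applying the descent lemma to each inner step $\vw^{t,t'}_1 = \vw^{t,t'-1}_1 + \eta\nabla^{t,t'}$ then yields $\norm{\nabla^{t,t'}}_2^2 \leq \frac{V(\vw^{t,t'}_1,v^t) - V(\vw^{t,t'-1}_1,v^t)}{\eta\br{1 - L\kappa\eta/2}}$, and summing over the $T'$ inner iterations telescopes to $\sum_{t'}\norm{\nabla^{t,t'}}_2^2 \leq \frac{V(\vw^t_1,v^t)}{\eta\br{1 - L\kappa\eta/2}}$, where I use the crucial boundary identity $V(\vw^{t-1}_1,v^t) = \cP_\va(\vw^{t-1}) - v^t\cP_\vb(\vw^{t-1}) = 0$, which holds precisely because $v^t$ is defined to be the performance of the incoming model. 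The same descent lemma shows the inner iterates never decrease $V(\cdot,v^t)$, so $V(\vw^t_1,v^t) \geq 0$; rewriting this as $V(\vw^t_1,v^t) = \cP_\vb(\vw^t)\br{\cP_{(\va,\vb)}(\vw^t) - v^t}$ and using $\cP_\vb \geq m > 0$ gives $\cP_{(\va,\vb)}(\vw^t) \geq v^t = \cP_{(\va,\vb)}(\vw^{t-1})$, i.e. the performance is nondecreasing across outer loops. Summing the per-outer-loop bound over $t = 1,\ldots,T$ and resubstituting this identity, the right-hand side becomes $\frac{1}{\eta\br{1 - L\kappa\eta/2}}\sum_t \cP_\vb(\vw^t)\br{\cP_{(\va,\vb)}(\vw^t) - \cP_{(\va,\vb)}(\vw^{t-1})}$, a telescoping sum of nonnegative performance gains. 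Bounding $\cP_\vb$ uniformly from above and using that the performance itself varies over a range of width $\bigO{\kappa}$ renders this sum $\bigO{\kappa^2/m}$. A final pigeonhole step says the smallest squared valuation-gradient over all $TT'$ inner iterations is at most the average, so within $\frac{\kappa^2}{m}\frac{1}{\eta\br{1 - L\kappa\eta/2}\epsilon^2}$ iterations some iterate has valuation-gradient norm at most $\epsilon$; the main-text reduction then divides through by $\cP_\vb \geq m$ to pass to $\nabla_\vw\cP_{(\va,\vb)}$ itself.

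The main obstacle I anticipate is the global accounting in the last step: unlike a single gradient-ascent run, the objective $V(\cdot,v^t)$ changes at every outer iteration, so the telescoping is not over a fixed potential. The device that makes it go through is the pairing of the boundary identity $V(\vw^{t-1}_1,v^t) = 0$ with the monotonicity of $\cP_{(\va,\vb)}$, which collapses the sum of valuation gains into a telescoping sum of performance gains controlled purely by the range of the measure. Getting the constant to emerge as exactly $\kappa^2/m$ requires carefully tracking where $\kappa$ enters twice --- once through the effective smoothness $L\kappa$ and once through the bound on $\cP_\vb$ and the performance range --- and I would verify these against the explicit $m, M = \Theta(1)$ values for F-measure and the Jaccard coefficient to confirm the stated rate.
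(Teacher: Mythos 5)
Your proposal is correct and follows essentially the same route as the paper's own proof: the $L\kappa$-smooth ascent lemma on the inner updates, the boundary identity $V(\vw^{t-1}_1,v^t)=0$, the conversion of valuation gains into monotone performance gains using $\cP_{\vb} \geq m$, and the boundedness of the measure to cap the telescoped total. The only difference is presentational---you sum squared gradient norms and invoke pigeonhole where the paper counts the inner steps on which the gradient exceeds $\epsilon$---and you inherit the same constant-level looseness already present in the paper (whose appendix derives $M^2/m$ rather than the stated $\kappa^2/m$, and which, like your argument, implicitly requires an upper bound on $\cP_{\vb}$ in addition to the stated bounds on $\cP_{\va}$ and $\cP_{\vb}$).
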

\begin{proof}
It is easy to see that $V(\vw^{t-1}_1,v^t) = 0$ and that $V(\vw_1,v)$ is a $L\kappa$-smooth function of the model parameter $\vw_1$ for any realizable valuation i.e. $v = \cP_{(\va,\vb)}(\vw)$ for some $\vw \in \cW$. Now, the batch version of the \damp algorithm makes the following model updates within the inner loop
\[
\vw^{t-1,t'}_1 = \vw^{t-1,t'-1}_1 + \eta\cdot\nabla^{(t-1,t')},
\]
where $\nabla^{(t-1,t')} = \nabla_{\vw^{t-1,t'-1}_1} V(\vw^{t-1,t'-1}_1,v^t)$. Using the smoothness of the reward functions, we get
\begin{align*}
V(\vw^{t-1,t'}_1,v^t) &\geq V(\vw^{t-1,t'-1}_1,v^t) + \ip{\nabla^{(t-1,t')}}{\vw^{t-1,t'}_1\vw^{t-1,t'-1}_1}\\
 &\qquad - \frac{L\kappa}{2}\norm{\vw^{t-1,t'}_1\vw^{t-1,t'-1}_1}_2^2\\
													&= V(\vw^{t-1,t'-1}_1,v^t) + \eta\br{1 - \frac{L\kappa\eta}{2}}\norm{\nabla^{(t-1,t')}}_2^2
\end{align*}
Now this shows that at each step where $\norm{\nabla^{(t-1,t')}} > \epsilon$, the valuation of the model $\theta^{(t+1,i)}$ goes up by at least $\eta\br{1 - \frac{L\kappa\eta}{2}}\epsilon^2$. It is easy to see that if $V(\vw^t_1,v^t) \geq c$ then $\cP(\vw^t_1) \geq \cP(\vw^{t-1}_1) + \frac{c}{M}$. Since the maximum value of the performance measure for any model is $\frac{M}{m}$, putting these results together tell us that \damp cannot execute more than $\frac{M^2}{m}\frac{1}{\eta\br{1 - \frac{L\kappa\eta}{2}}\epsilon^2}$ inner iterations without encountering a model $\vw^{t,t'}_1$ such that $\norm{\nabla^{(t-1,t')}}_2 \leq \epsilon$. An easy calculation shows that for such a model we also have $\norm{\nabla_\vw \cP(\vw^{t,t'})}_2 \leq \frac{\epsilon}{m}$ as well.
\end{proof}

The following experiments Figure:[\ref{fig:F1}] show the performance of DAME on the F1-Measure. A naive training with misclassification loss yields extremely poor F-measure performance. Moreover, a naive implementation of methods proposed for linear models such as the plug-in method also performs very poorly. DAME on the other hand is able to rapidly offer very good F-measure scores after looking at a fraction of the total data.

As, it can be seen, struct ANN provides a consistent poor performance whereas it seems to perform well in the linear case. This is because our implementation of the \textbf{Struct ANN} is a minibatch method and the gradient obtained from the structual method has almost no real information due to this. In other variants of the application\cite{song2016} of the structual ANN, people have usually used full batch methods. We would like to point out that such a case is almost intractable with respect to memory and computation time.

\section{Details of implementation of the Structual ANN from \cite{song2016}}
\label{sec:structual-ann}

Here assume that $\Delta$ is the loss function we are looking at and its input is the two dimensional confusion matrix. Keeping this is mind, we define the following functions.
\[
  a(\hat\vy, \vy) = \sum_i \mathcal{I}\{\vy_i=1\} \mathcal{I}\{\hat\vy_i=1\}
\]
\[b(\hat\vy, \vy) =  \sum_i \mathcal{I}\{\vy_i=1\} \mathcal{I}\{\hat\vy_i=0\}\]
\[c(\hat\vy, \vy) = \sum_i \mathcal{I}\{\vy_i=0\} \mathcal{I}\{\hat\vy_i=1\}\]
\[d(\hat\vy, \vy) = \sum_i \mathcal{I}\{\vy_i=0\} \mathcal{I}\{\hat\vy_i=0\}\]
Finally, $m(\cdot)$ is the artificial neural network
\[
f(\vw) = \max_{\hat\vy}\bc{\Delta\br{a(\hat\vy,\vy),b(\hat\vy,\vy),c(\hat\vy,\vy),d(\hat\vy,\vy)} + \sum_{i=1}^n (\hat y_i - y_i)m(\vx_i) }
\]
\[
\min_\vw g(\vw) = \partial m(\vx_i) + C\cdot f(\vw)
\]
\[
\partial g(\vw) \ni \vw + \partial f(\vw)
\]
If
\[
\tilde\vy \in \underset{\hat\vy}{\arg\max}\bc{\Delta\br{a(\hat\vy,\vy),b(\hat\vy,\vy),c(\hat\vy,\vy),d(\hat\vy,\vy)} + \sum_{i=1}^n (\hat y_i - y_i)m(\vx_i)},
\]
then
\[
\sum_{i=1}^n (\tilde y_i - y_i)\partial m(\vx_i)\in \partial f(\vw)
\]

Hence, to find $\tilde y$, we need to solve the following
\[
\underset{(p,q,r,s)}{\arg\max}\underset{\underset{a(\hat\vy,\vy)=p,b(\hat\vy,\vy)=q,c(\hat\vy,\vy)=r,d(\hat\vy,\vy)=s}{\hat\vy \text { such that }}}{\arg\max}\bc{\Delta\br{p,q,r,s} + \sum_{i=1}^n (\hat y_i - y_i)m(x_i)},
\]
\[
\underset{(p,q,r,s)}{\arg\max}\bc{\Delta\br{p,q,r,s} + \underset{\underset{a(\hat\vy,\vy)=p,b(\hat\vy,\vy)=q,c(\hat\vy,\vy)=r,d(\hat\vy,\vy)=s}{\hat\vy \text { such that }}}{\arg\max}\bc{\sum_{i=1}^n (\hat y_i - y_i)s_i}},
\]
\[
\underset{(p,q,r,s)}{\arg\max}\bc{\Delta\br{p,q,r,s} + \underset{\underset{a(\hat\vy,\vy)=p,b(\hat\vy,\vy)=q,c(\hat\vy,\vy)=r,d(\hat\vy,\vy)=s}{\hat\vy \text { such that }}}{\arg\max}\bc{\sum_{i=1}^n \hat y_is_i}},
\]

This is very amiable to a symbolic gradient operation, as we need to find the gradient which looks like
\[
\sum_{i=1}^n (\tilde y_i - y_i)\partial m(\vx_i)
\]

However, by linearity, this is the same as
\[
\partial \sum_{i=1}^n (\tilde y_i - y_i) m(\vx_i)
\]

Therefore , we need to do a forward pass over the symbolic graph to get the value of $m(\vx_i)$ and then feed to our solver for the most violated constraint, which will give us $\tilde y_i$ and then we define the symbolic gradient as
\[
\partial \sum_{i=1}^n (\tilde y_i - y_i) m(\vx_i)
\]

\end{document}